\newtheorem{theorem}{Theorem}
\newtheorem*{theorem*}{Theorem}
\def\eqref#1{equation~\ref{#1}}
\def\1{\bm{1}}
\def\vg{{\bm{g}}}
\def\vp{{\bm{p}}}
\def\mC{{\bm{C}}}
\def\mJ{{\bm{J}}}
\def\mV{{\bm{V}}}
\DeclareMathAlphabet{\mathsfit}{\encodingdefault}{\sfdefault}{m}{sl}
\SetMathAlphabet{\mathsfit}{bold}{\encodingdefault}{\sfdefault}{bx}{n}
\def\sT{{\mathbb{T}}}
\newcommand{\R}{\mathbb{R}}
\title{Auxiliary Task Update Decomposition:\\ The Good, The Bad and The Neutral}
\author{Lucio M. Dery\\
Department of Computer Science\\
Carnegie Mellon University\\
Pittsburgh, PA, USA \\
\And
Yann Dauphin\\
Google Research\\
\And
David Grangier\\
Google Research\\
}
\begin{document}
\maketitle

\begin{abstract}

While deep learning has been very beneficial in data-rich settings, tasks with smaller training set
often resort to pre-training or multitask learning to leverage data from other tasks. In this case,
careful consideration is needed to select tasks and model parameterizations such that updates from
the auxiliary tasks actually help the primary task. We seek to alleviate this burden by formulating a model-agnostic framework that performs fine-grained manipulation of the auxiliary task gradients. We propose to decompose auxiliary updates into directions which help, damage or leave the primary task loss unchanged. This allows weighting the update directions 
differently depending on their impact on the problem of interest. We present a novel and efficient algorithm for that
purpose and show its advantage in practice. Our method leverages efficient automatic differentiation 
procedures and randomized singular value decomposition for scalability. We show that our framework is 
generic and encompasses some prior work as particular cases. Our approach consistently outperforms strong and widely used baselines when leveraging out-of-distribution data for Text and Image classification tasks. 
\end{abstract}

\section{Introduction}

Multitask learning \citep{caruana97multitask} and pretraining \citep{devlin2018bert, caron2019unsupervised} 
have transformed machine learning by allowing downstream tasks with small training sets to benefit from statistical
regularities from data-rich related tasks \citep{collobert2008unified,zhang2014facial,liu2019roberta,Kornblith_2019}. 
Despite these advances, leveraging the mixing of tasks is still an art left
to the practitioner. When one is interested in a {\it primary} task, it is unclear how to select helpful 
auxiliary tasks, an appropriate parameter sharing architecture and a good way to filter out auxiliary data 
which might be detrimental to the primary tasks. Without careful choices, pre-training might hurt end-task 
performance \citep{Gururangan_2020} or have limited impact \citep{raghu2019transfusion}.

Prior work has examined these problems and proposed solutions, either to choose auxiliary 
tasks depending on their impact on the primary task \citep{du2018aux,lin2019adaptive} or to equalize the 
impact of updates across tasks \citep{sener2018multi,chen2018gradnorm,hessel2019popart}. 
Recently, several approaches~\citep{sinha2018gradient,suteu2019regularizing,yu2020gradient} have been proposed
that attempt to minimize interference between the updates across tasks.
Our work builds on this direction, but unlike these previous approaches, we do not
consider a symmetric view of multi-task learning in the sense that our goal is not to train a model performing well 
on all tasks. Instead, we focus on improving generalization for a single task, the primary task, and the other tasks, the auxiliary tasks 
are considered only through their impact on the problem of interest.

For that purpose, we introduce a framework which decomposes the gradient updates from the auxiliary tasks according 
to their impact on the primary task. We analyze the auxiliary task gradients in the subspace spanned by the primary task per-example gradients. This allows us to decompose auxiliary gradients into into three components : components
that help, interfere or have no impact on the primary task according to the Taylor expansion of the 
expected primary loss. This decomposition allows us to re-weight each
component differently prior to the update. Our framework enables us to treat each auxiliary update differently depending on its impact on the task of interest and it encompasses prior methods such as classical multitask 
learning \citep{caruana97multitask} or more novel gradient surgery techniques \citep{yu2020gradient}. 
To achieve a tractable approach, we introduce an efficient, robust algorithm (ATTITTUD, Auxiliary Task 
Training with Influence from Target Task Update Direction) to estimate the subspace spanned 
by the primary task gradients in an online manner and decompose the auxiliary updates appropriately. As a result, we can integrate our approach with the stochastic training of large neural networks in various contexts. 

The contribution of our work is four-fold. To our knowledge, this paper proposes the first approach to adapt auxiliary gradients using a decomposition built from the span of the primary task Jacobian. In order to scale this approach to deep neural nets, we contribute a tractable and efficient algorithm called ATTITTUD that leverages insights from randomized linear algebra and automatic differentiation such as the R-operator \citep{pearlmutter1994fast}. As our third contribution, we show that the fine-grained manipulation of the auxiliary task gradients under ATTITTUD, represents a unified framework that encompasses several previous approaches to asymmetrical task learning as special cases. Finally, we demonstrate the efficacy of our approach in both data-rich and data-starved primary tasks, over both images and textual data.

\section{Related Work}

Methods to leverage data outside of the task of interest have been popular in machine learning since 
the inception of multitask learning~\citep{caruana97multitask,ruder2017overview,v2020revisiting}. 
These methods address multiple task simultaneously and have been successful in various application domains~\citep{collobert2008unified,zhang2014facial,misra2016crossstitch}.
The optimization problem induced by multitask learning is difficult and solutions have been 
proposed for the various difficulties, including dealing with task gradients of different 
magnitude~\citep{sener2018multi,chen2018gradnorm,hessel2019popart}, or interfering with 
each others~\citep{sinha2018gradient,suteu2019regularizing,yu2020gradient}. The specific 
problem of interference has been studied extensively in the context of continual learning.
Continual learning visits task in sequence and update interference is particularly problematic 
as it yields newer tasks to damage previously mastered tasks. In particular, a family of methods
to project the gradient of the new tasks to be orthogonal to the gradient of the previous tasks 
has been proposed~\citep{lopezpaz2017,Chaudhry2018,farajtabar2019orthogonal}.

Different from many previous approaches, we are not interested in addressing multiple tasks per se.
In our setting, only the {\it primary} task matters and the other {\it auxiliary} task have
the sole role of improving generalization on the primary task. This is the setting
considered by~\cite{du2018aux,lin2019adaptive}, who favor auxiliary tasks whose gradient 
directions are helpful to the primary task. Unlike these works that use coarse properties like the cosine similarity between averaged gradients, our approach allows fine-grained gradient manipulation within a subspace. Also, in our case, we do not distinguish between
the different auxiliary tasks. Instead, we aim at correcting every auxiliary gradient
in the same manner to improve the loss on the primary task. This type of gradient correction 
is related to \cite{yu2020gradient}, which considers projecting multi-task gradients such that 
the directions of disagreement are removed. This method is actually a special case 
of our framework.

Our work also shares some similarities with data selection and domain adaptation approaches.
In this case, the training data comes from a single task but its distribution is different
from the validation/test distribution~\citep{moore2010intelligent,axelrod2011domain,domain_adapt_transfer}. This classical problem has recently
been addressed by sampling training points whose gradient aligns well with the expected validation 
gradient~\citep{wang2019optimizing,wang2019learning}. Instead of sampling individual points based on an estimated distribution of how helpful they will be to the primary task, our work avoids the use (and inherent challenges) of this reinforcement learning approach by operating on batch gradients of groups of points. 

Our primary task/auxiliary task setting is also related to the pre-training then fine-tuning 
paradigm in which the auxiliary tasks are visited first (pre-training) to give an 
initialization for training on the primary task (fine-tuning). These methods have been
very successful in settings where primary task data are rare. 
In particular, it is common to first rely on an unsupervised task over very large 
datasets prior to fine tuning over a supervised task 
\citep{devlin2018bert,liu2019roberta,Kornblith_2019,yang2019xlnet,song2019mass, caron2018deep}. 


\begin{figure}[t!]
\centering
\includegraphics[width=.33\textwidth]{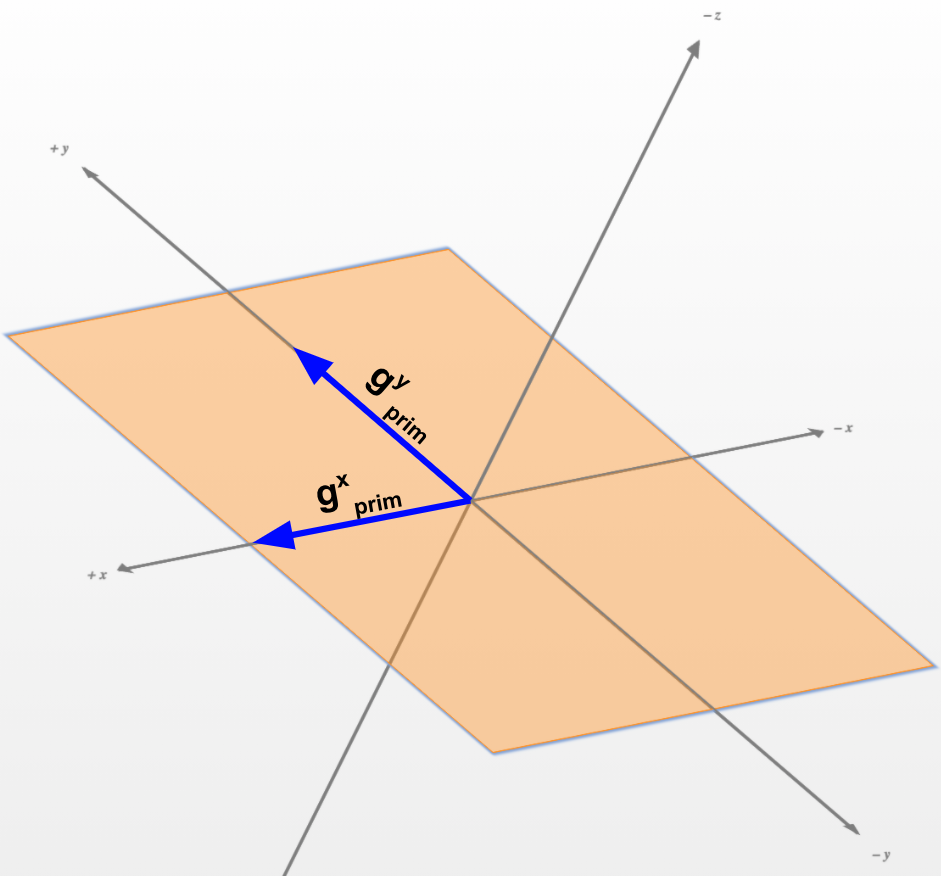}\hfill
\includegraphics[width=.33\textwidth]{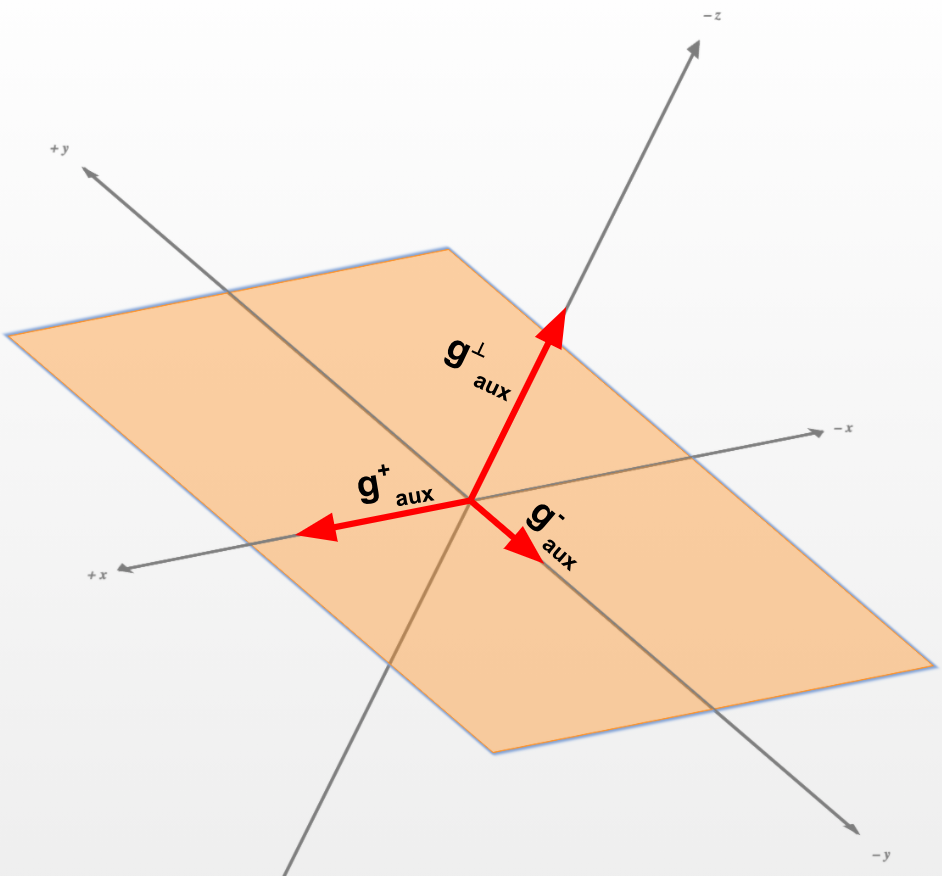}\hfill
\includegraphics[width=.33\textwidth]{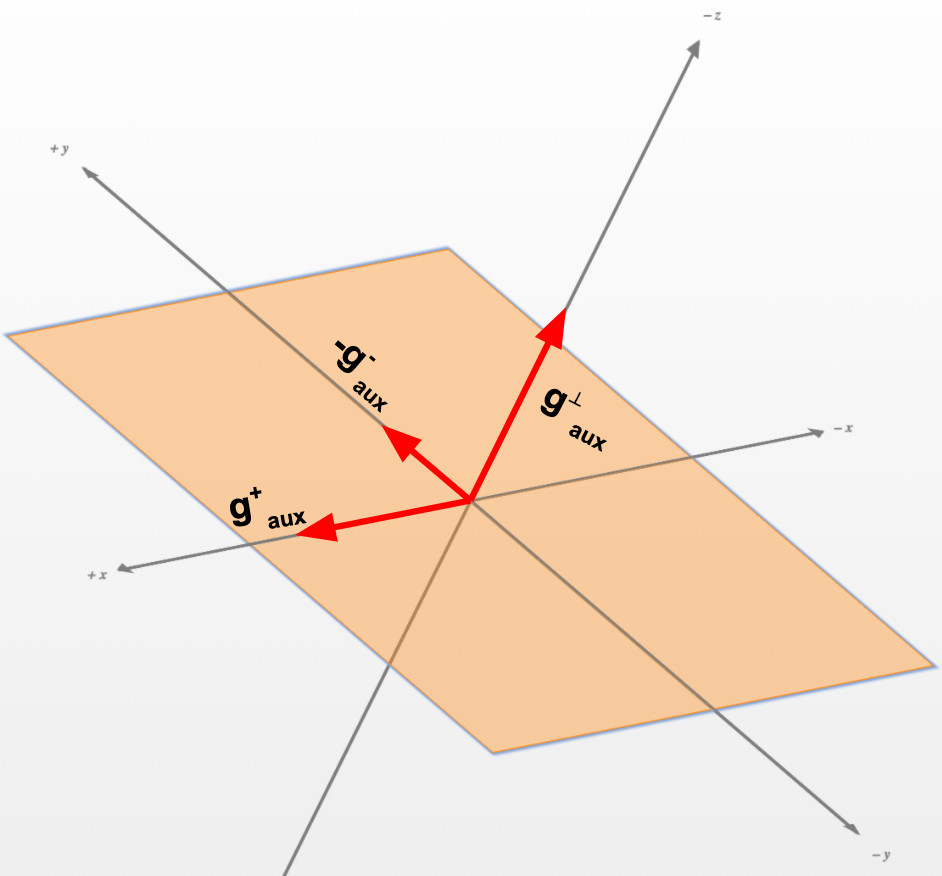}
\caption{Example gradient manipulation in the 2-D $x-y$ plane with ATTITUD. ATTITUD can operate in any n-dimensional subspace. \textbf{Left}: Primary task gradient $\vg_{\textit{prim}}$ decomposed along the 3 Dimensions $x$, $y$ and $z$. \textbf{Mid}: Decomposed Auxiliary task gradient $\vg_{\textit{aux}}$. We label the $x$ component of  $\vg_{\textit{aux}}$ as positive since it agrees (in direction) with the $x$ component of $\vg_{\textit{prim}}$. Since the $y$ component of $\vg_{\textit{aux}}$ is in the opposite direction as that of $\vg_{\textit{prim}}$, this is assigned a negative label. \textbf{Right}: Corresponds to  $\tilde{\vg}_{\textit{aux}}$ obtained by applying $\boldsymbol{\eta}_{\textit{aux}} = \big(1.0, 1.0, -1.0 \big)$. We flip the conflicting gradient direction to agree with our primary task. This is just one configuration achievable under our framework.}
\label{fig:figure1}
\end{figure}

\section{Auxiliary Task Update Decomposition}
\label{section:Methodology}

This section introduces a new method to improve generalization on a primary task $T^{*}$ using
training data from auxiliary tasks $\sT = \{T_1, \dots, T_n\}$, where $\theta \in \R^{D}$ denote the parameters shared by all tasks. Our approach leverages gradient updates
from the auxiliary tasks, but unlike the traditional approach, we decompose these gradients to maximize their usefulness to $T^*$. 
Precisely, we decompose the auxiliary task gradients into directions which decrease a first-order approximation of the primary task loss, increase it or have no effect. This decomposition allows weighting
these three directions differently when learning from the auxiliary tasks.

In order to decompose the auxiliary gradient, we must collect more fine-grained statistics about the primary task.
At each training step, we collect the gradient of the loss with respect 
to $\theta$ for individual examples from the primary task,  $\{\nabla_\theta {\cal L}^{\rm prim}_i, \forall i\ \}$.
The span of these vectors, 
$$
{\cal S} = {\rm Span} \{\nabla_\theta {\cal L}^{\rm prim}_i, \forall i\}
$$
defines a subspace in which any linear combination of primary task gradients lies, including
the gradient of the expected primary task loss, i.e.
$
 \vg_{\textit{prim}} = \mathop {\mathbb{E}}( \nabla_\theta {\cal L}^{\rm prim}_i ) \in {\cal S}.
$ We denote the size of the subspace, $|\cal{S}| = $ $K$. This is upper-bounded by the number of examples $m$, used to construct $\cal S$.
If we define the orthogonal complement of ${\cal S}$ as ${\cal S}^\perp$, any vector 
$v \in {\cal S}^\perp$, is therefore orthogonal to $\vg_{\textit{prim}}$, i.e.
$v \cdot \vg_{\textit{prim}} = 0$. This means that adding such a vector to the parameters has
no impact on the expected primary task loss, according the order-1 Taylor expansion of ${\cal L}^{\rm prim}$, i.e.
$$
{\cal L}^{\rm prim}(\theta + v)  \simeq  {\cal L}^{\rm prim}(\theta) + v \cdot \vg_{\textit{prim}} 
                                 =  {\cal L}^{\rm prim}(\theta). \nonumber
$$
We propose to project auxiliary task gradients onto ${\cal S}$ and ${\cal S}^\perp$. This allow to 
distinguish between the directions of the auxiliary task updates which impact the primary task loss
and those which do not. If we denote
the averaged auxiliary task gradient as
$
\vg_{\textit{aux}} = \mathop{\mathbb{E}}( \nabla_\theta {\cal L}^{\rm aux}_i ),
$
we can decompose this gradient as
$
    \vg_{\textit{aux}} = \vg^{\pm}_{\textit{aux}} + \vg^{\perp}_{\textit{aux}}.
$
where $\vg^{\pm}_{\textit{aux}} \in {\cal S}$ is the portion of the gradient that lies in the span of the primary task example gradients and $\vg^{\perp}_{\textit{aux}} \in {\cal S}^\perp$ is the portion that lies outside of it.
Since $\vg^{\perp}_{\textit{aux}} \in {\cal S}^\perp$, it is orthogonal to the average primary task gradient 
and parameter updates along the direction of $\vg^{\perp}_{\textit{aux}}$ are expected to
have limited impact on the primary task loss. On the other hand, updates along the direction of 
$\vg^{\pm}_{\textit{aux}}$ can potentially improve or damage the averaged primary task loss. 
This component deserves a more careful treatment.

For that purpose, we introduce $\{u_i, i = 1, \dots, K \}$ an orthonormal basis of ${\cal S}$. In this basis,
we can measure if the components of $\vg^{\pm}_{\textit{aux}}$ agree or disagree with $\vg_{\textit{prim}}$.
We say that the two gradients agree along $u_i$ iif 
${\rm sign}(\vg^{\pm}_{\textit{aux}} \cdot u_i) = {\rm sign}(\vg_{\textit{prim}} \cdot u_i).$
This means that we can decompose
$
    \vg^{\pm}_{\textit{aux}} = \vg^{+}_{\textit{aux}} + \vg^{-}_{\textit{aux}} 
$
where $\vg^{+}_{\textit{aux}}$ refers to the projection of $\vg^{\pm}_{\textit{aux}}$ onto the
basis vectors where $\vg^{\pm}_{\textit{aux}}$ and $\vg_{\textit{prim}}$ agree. By this decomposition,
 $\vg^{+}_{\textit{aux}}$ helps the primary task, 
$\vg^{+}_{\textit{aux}} \cdot \vg_{\textit{prim}} > 0,$ 
while $\vg^{-}_{\textit{aux}}$ interfere with the primary task, 
$\vg^{-}_{\textit{aux}} \cdot \vg_{\textit{prim}} < 0.$

Guided by the primary task, we can therefore decompose the auxiliary task gradient as
\begin{equation}
    \vg_{\textit{aux}} = \vg^{\perp}_{\textit{aux}} + \vg^{+}_{\textit{aux}} + \vg^{-}_{\textit{aux}}
\end{equation}
which is described on Fig~\ref{fig:figure1}. Our approach proposes to re-weight differently 
the components of $\vg_{\textit{aux}}$, i.e.
\begin{equation}
\label{eqn:decomp}
    \tilde{\vg}_{\textit{aux}} = \eta_{\perp} \vg^{\perp}_{\textit{aux}} +
    \eta_{+} \vg^{+}_{\textit{aux}} + \eta_{-} \vg^{-}_{\textit{aux}} 
\end{equation}
where $\boldsymbol{\eta}_{\textit{aux}} = (\eta_{\perp}, \eta_{+}, \eta_{-})$ are hyper-parameters 
adjusting the auxiliary gradient according to the impact on the main task. If we also wish to include the primary task 
gradient in descent, as with multitasking, we can introduce $\boldsymbol{\eta}_{\textit{prim}}$ as a scalar control variable to modulate its weighting. 

A consequence of introducing $\boldsymbol{\eta}_{\textit{aux}}$ is that specific configurations lead us to gradient updates that are guaranteed to do no harm to both tasks. This is captured by Theorem \ref{theorem:1} below. 
\begin{theorem}
\label{theorem:1}
Let $\mathcal{L}^{\rm aux}(\theta_t)$ and $\mathcal{L}^{\rm prim}(\theta_t)$ represent the full batch losses of the auxiliary tasks and primary task respectively at step t. We assume the gradients of $\mathcal{L}^{\rm aux}$ and $\mathcal{L}^{\rm prim}$ are Lipschitz continuous with constant $L > 0$. 
Following the update rule : $\theta_{t + 1} = \theta_t - \alpha \cdot \tilde{\vg}_{\textit{aux}}$, where $\alpha \leq \frac{1}{L}$ is the learning rate, we are guaranteed : 
\begin{equation*}
    \begin{split}
        \mathcal{L}^{\rm aux}(\theta_{t + 1}) &\leq \mathcal{L}^{\rm aux}(\theta_t) \\
        \mathcal{L}^{\rm prim}(\theta_{t + 1}) &\leq \mathcal{L}^{\rm prim}(\theta_t) \\
    \end{split}
\end{equation*}
If $\eta_{-} = 0$ and $\eta_{\perp}, \eta_{+} \geq 0$
\end{theorem}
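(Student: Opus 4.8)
The plan is to apply the standard descent lemma — the quadratic upper bound implied by an $L$-Lipschitz gradient — separately to $\mathcal{L}^{\rm aux}$ and $\mathcal{L}^{\rm prim}$, in each case at the iterate $\theta_t$ with displacement $\theta_{t+1}-\theta_t = -\alpha\,\tilde{\vg}_{\textit{aux}}$. For an $L$-smooth $f$ with $\nabla f(\theta_t)=g$ this gives $f(\theta_{t+1}) \le f(\theta_t) - \alpha\,\langle g,\tilde{\vg}_{\textit{aux}}\rangle + \tfrac{L\alpha^{2}}{2}\,\|\tilde{\vg}_{\textit{aux}}\|^{2}$, so everything reduces to lower-bounding $\langle g,\tilde{\vg}_{\textit{aux}}\rangle$ for $g\in\{\vg_{\textit{aux}},\vg_{\textit{prim}}\}$ and then checking that $\alpha\le 1/L$ lets that term absorb the quadratic remainder.

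The structural input I would use is that the three pieces $\vg^{\perp}_{\textit{aux}},\vg^{+}_{\textit{aux}},\vg^{-}_{\textit{aux}}$ are pairwise orthogonal: $\vg^{\perp}_{\textit{aux}}\in\mathcal{S}^{\perp}$, while $\vg^{+}_{\textit{aux}}$ and $\vg^{-}_{\textit{aux}}$ are projections of a vector of $\mathcal{S}$ onto disjoint subsets of the orthonormal basis $\{u_i\}$. Hence all cross terms drop, $\langle\vg_{\textit{aux}},\tilde{\vg}_{\textit{aux}}\rangle = \eta_{\perp}\|\vg^{\perp}_{\textit{aux}}\|^{2}+\eta_{+}\|\vg^{+}_{\textit{aux}}\|^{2}+\eta_{-}\|\vg^{-}_{\textit{aux}}\|^{2}$ and $\|\tilde{\vg}_{\textit{aux}}\|^{2} = \eta_{\perp}^{2}\|\vg^{\perp}_{\textit{aux}}\|^{2}+\eta_{+}^{2}\|\vg^{+}_{\textit{aux}}\|^{2}+\eta_{-}^{2}\|\vg^{-}_{\textit{aux}}\|^{2}$; with $\eta_{-}=0$ and $\eta_{\perp},\eta_{+}\ge 0$ the inner product $\langle\vg_{\textit{aux}},\tilde{\vg}_{\textit{aux}}\rangle$ is non-negative. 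For the primary loss I would expand $\langle\vg_{\textit{prim}},\tilde{\vg}_{\textit{aux}}\rangle = \eta_{\perp}\langle\vg_{\textit{prim}},\vg^{\perp}_{\textit{aux}}\rangle + \eta_{+}\langle\vg_{\textit{prim}},\vg^{+}_{\textit{aux}}\rangle + \eta_{-}\langle\vg_{\textit{prim}},\vg^{-}_{\textit{aux}}\rangle$ and invoke the facts from Section~\ref{section:Methodology}: the first inner product is $0$ because $\vg^{\perp}_{\textit{aux}}\perp\mathcal{S}\ni\vg_{\textit{prim}}$; the second is $\ge 0$ by the sign-agreement construction of $\vg^{+}_{\textit{aux}}$; the third is killed by $\eta_{-}=0$. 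Thus $\langle\vg_{\textit{prim}},\tilde{\vg}_{\textit{aux}}\rangle=\eta_{+}\langle\vg_{\textit{prim}},\vg^{+}_{\textit{aux}}\rangle\ge 0$.

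Substituting back into the descent bound, both inequalities amount to $\langle g,\tilde{\vg}_{\textit{aux}}\rangle \ge \tfrac{L\alpha}{2}\|\tilde{\vg}_{\textit{aux}}\|^{2}$. For $\mathcal{L}^{\rm aux}$ this splits componentwise into terms $\eta\big(1-\tfrac{L\alpha}{2}\eta\big)\|\cdot\|^{2}$, each non-negative once $\alpha L\le 1$ and the retained coefficients stay in a bounded range (e.g. $\eta_{\perp},\eta_{+}\in[0,1]$), while for $\mathcal{L}^{\rm prim}$ it reduces to checking that $\eta_{+}\langle\vg_{\textit{prim}},\vg^{+}_{\textit{aux}}\rangle$ outweighs $\tfrac{L\alpha}{2}\big(\eta_{\perp}^{2}\|\vg^{\perp}_{\textit{aux}}\|^{2}+\eta_{+}^{2}\|\vg^{+}_{\textit{aux}}\|^{2}\big)$. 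I expect this final bookkeeping — making the quadratic penalty $\tfrac{L\alpha^{2}}{2}\|\tilde{\vg}_{\textit{aux}}\|^{2}$ harmless — to be the crux, and the primary-loss case the delicate one, since the penalty is charged on the retained neutral direction $\vg^{\perp}_{\textit{aux}}$ even though the primary loss decreases to first order only along $\vg^{+}_{\textit{aux}}$; it is precisely here that the step-size restriction $\alpha\le 1/L$ (together with keeping $\eta_{\perp},\eta_{+}$ bounded rather than merely non-negative) is doing the work. The orthogonality claim and the descent lemma themselves are standard, so no further machinery is required.
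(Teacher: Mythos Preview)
Your route via the descent lemma differs from the paper's: the paper works only to first order, writing $\mathcal{L}(\theta_{t+1})\approx\mathcal{L}(\theta_t)-\alpha\langle\nabla\mathcal{L}(\theta_t),\tilde{\vg}_{\textit{aux}}\rangle$ and never actually invoking the Lipschitz constant or the bound $\alpha\le 1/L$ inside the derivation. The two inner-product computations you sketch --- $\langle\vg_{\textit{aux}},\tilde{\vg}_{\textit{aux}}\rangle\ge 0$ and $\langle\vg_{\textit{prim}},\tilde{\vg}_{\textit{aux}}\rangle\ge 0$ under $\eta_{-}=0$, $\eta_{\perp},\eta_{+}\ge 0$ --- are exactly what the paper establishes, and your orthogonality argument for them is correct and matches the paper's. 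For the auxiliary loss your descent-lemma version also goes through, since there each quadratic term $\eta^{2}\|\cdot\|^{2}$ is paired with a linear term $\eta\|\cdot\|^{2}$ of the same norm.

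The gap is precisely where you flag the primary-loss case as ``delicate'' and hope the step-size restriction handles it: it does not. With $\eta_{-}=0$ the descent-lemma bound for $\mathcal{L}^{\rm prim}$ reads
\[
\mathcal{L}^{\rm prim}(\theta_{t+1})\ \le\ \mathcal{L}^{\rm prim}(\theta_t)\ -\ \alpha\,\eta_{+}\langle\vg_{\textit{prim}},\vg^{+}_{\textit{aux}}\rangle\ +\ \tfrac{L\alpha^{2}}{2}\bigl(\eta_{\perp}^{2}\|\vg^{\perp}_{\textit{aux}}\|^{2}+\eta_{+}^{2}\|\vg^{+}_{\textit{aux}}\|^{2}\bigr),
\]
and the term $\tfrac{L\alpha^{2}}{2}\,\eta_{\perp}^{2}\|\vg^{\perp}_{\textit{aux}}\|^{2}$ has no linear counterpart to absorb it. If $\vg^{+}_{\textit{aux}}=0$ while $\vg^{\perp}_{\textit{aux}}\neq 0$ and $\eta_{\perp}>0$, the linear gain is zero but the quadratic penalty is strictly positive for every $\alpha>0$ and every bounded $\eta_{\perp}$. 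A two-parameter quadratic example makes this concrete: take $\mathcal{L}^{\rm prim}(\theta)=\tfrac{L}{2}\theta_1^{2}+\tfrac{L}{2}(\theta_2-1)^{2}$, $\mathcal{L}^{\rm aux}(\theta)=\tfrac{L}{2}\theta_2^{2}$, $\theta_t=(1,1)$; then $\mathcal{S}=\mathrm{span}\{(1,0)\}$, $\vg_{\textit{aux}}=(0,L)\in\mathcal{S}^{\perp}$, $\tilde{\vg}_{\textit{aux}}=\eta_{\perp}(0,L)$, and $\mathcal{L}^{\rm prim}(\theta_{t+1})=\tfrac{L}{2}(1+\eta_{\perp}^{2}\alpha^{2}L^{2})>\mathcal{L}^{\rm prim}(\theta_t)$. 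So the rigorous inequality for $\mathcal{L}^{\rm prim}$ does not follow from the stated hypotheses; the paper's proof sidesteps this by stopping at the first-order approximation, and your descent-lemma route cannot close the gap without an extra assumption (for instance $\eta_{\perp}=0$, or a quantitative lower bound on $\langle\vg_{\textit{prim}},\vg^{+}_{\textit{aux}}\rangle$ in terms of $\|\vg^{\perp}_{\textit{aux}}\|$).
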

\begin{proof}
See Appendix \ref{appendix:grad_hippo_oath}
\end{proof}
This theorem focuses on a single update and guarantees progress on both auxiliary and primary tasks. However,
our asymmetric scenario is not interested in improving the auxiliary tasks per se and is amenable to more aggressive settings. Ideally we want gradient updates during pre-training with $\sT$ to not only do-no-harm to $T^*$ when applied downstream but also to be along descent directions that are maximally beneficial to $T^*$. We can consider $\eta_{-} < 0$ as in Fig \ref{fig:figure1}. Reversing the direction of $\vg^{-}_{\textit{aux}}$ by setting $\eta_{-} < 0$ preserves the descent guarantee on $\mathcal{L}_{\textit{prim}}(\theta_{t + 1})$ but no longer ensures descent on $\mathcal{L}_{\textit{aux}}(\theta_{t + 1})$.
There are other interesting settings for our control parameters. One can recover the original gradient $\vg_{\textit{aux}}$ with $\eta_{\perp} = \eta_{-} = \eta_{+} = 1.0$. One can choose to drop gradients orthogonal to the primary task gradient span with $\eta_{\perp} = 0.0$, or ignore those which conflict with the main task by setting $\eta_{-} = 0.0$.

{\bf Relationships to other approaches \quad} Our framework is generic and encompasses other approaches as a particular case. One can train solely on the primary task by selecting 
$\boldsymbol{\eta}_{\textit{aux}} = \big(0.0, 0.0, 0.0)$ and $\boldsymbol{\eta}_{\textit{prim}} = 1.0$. Classical multitasking corresponds
to $\boldsymbol{\eta}_{\textit{aux}} = \big(1.0, 1.0, 1.0)$ and $\boldsymbol{\eta}_{\textit{prim}} > 0.0$, while classical pre-training 
corresponds to performing a first phase with $\boldsymbol{\eta}_{\textit{aux}} = \big(1.0, 1.0, 1.0)$ and $\boldsymbol{\eta}_{\textit{prim}} = 0.0$.
Interestingly, our formulation introduces novel variants of pre-training, for instance, one can consider pre-training with only auxiliary gradients
helpful to the primary task, $\boldsymbol{\eta}_{\textit{aux}} = \big(0.0, 1.0, 0.0)$ and $\boldsymbol{\eta}_{\textit{prim}} = 0.0$, followed by
fine-tuning with $\boldsymbol{\eta}_{\textit{aux}} = \big(0.0, 0.0, 0.0)$ and $\boldsymbol{\eta}_{\textit{prim}} = 1.0$.

Our approach also instantiates PCGrad \citep{yu2020gradient} as a particular case. This method was introduced to address the issue of conflicting gradients in multitask settings. PCGrad orthogonalizes the gradients of each task and removes conflicting gradients. To recover PCGrad under our approach, note that it is equivalent to a specific choice of our decomposition in the 1-D subspace spanned by the $\vg_{\textit{prim}}$. PCGrad then removes components of $\vg_{\textit{aux}}$ that conflict with $\vg_{\textit{prim}}$ which is equivalent to ~$\boldsymbol{\eta}_{\textit{aux}} = \big(\alpha_{\textit{aux}}, \alpha_{\textit{aux}}, 0.0\big)$ and $\boldsymbol{\eta}_{\textit{aux}} = \alpha_{\textit{prim}}$.
\section{Implementation}
\label{section:impl_}
Equation \ref{eqn:decomp} requires selecting a basis for the span of primary task gradients. Multiple choices are possible to define the basis $\{u_i\}$, to represent the span at each optimization time-step. 
This choice is important since the components of $\vg^{\pm}_{\textit{aux}}$ are labeled positive or negative depending 
on how they agree with the projection of the averaged primary task gradient onto the same basis. 
A natural choice is to select the basis as the singular vectors of the matrix of primary task per-example gradients $\mJ^* \in \R^{m \times D}$, also know as the Jacobian. To improve efficiency and prevent over-fitting on a few examples, we consider the span defined by the, $k < |\cal{S}|$, largest principal vectors of $\mJ^* $. Using the principal vectors as directions of descent instead of the mean induces a more robust algorithm since the mini-batch average gradient is susceptible to outliers and skew from replicated data-points. To the best of our knowledge, we are the first to propose using the singular vectors of $\mJ^*$ as directions of descent. We leave the theoretical implications of this algorithm to future work but note that its variance reduction properties may induce generalization benefits \citep{namkoong:var_reg}.

We also consider alternative choices of bases as baselines, 
including the canonical parameter basis. This choice will examine the sign of every parameter update to verify whether it 
agrees with  $\vg_{\textit{prim}}$.
Whilst Theorem \ref{theorem:1} holds irrespective of the choice of basis, its proof reveals that the amount of progress 
made on each loss depends on the choice of basis. Specifically, the reduction in $ \mathcal{L}^{\rm prim}(\theta_{t + 1}), \mathcal{L}^{\rm aux}(\theta_{t + 1})$
after a gradient step along $\tilde{\vg}_{\textit{aux}}$ is proportional to the fraction of the norms of $\vg_{\textit{prim}}$ 
and $\vg_{\textit{aux}}$ captured by the subspace spanned by our choice of basis. To justify our use of the top singular values of $\mJ^*$, we evaluate this fraction for different choice of basis in our experiments (see Appendix \ref{appendix:exp_details}).

We are interested in applying our approach to the training of large neural networks and must consider a
scalable algorithmic solution. As stochastic optimization is prevalent in this setting,  we construct subspace {\cal S} from a mini-batch of primary task data. Similarly, the expected gradients $\vg_{\textit{prim}}$ and $\vg_{\textit{aux}}$ are defined over a mini-batch.
Instead of computing the singular value decomposition (SVD) of $\{\nabla_\theta {\cal L}^{\rm prim}_i, \forall i\}$ exactly, we rely on a randomized approximation \citep{halko2011finding,Rokhlin_2010,nakatsukasa2017accuracy}. This method
does not require instantiating the vectors $\{\nabla_\theta {\cal L}^{\rm prim}_i, \forall i\}$ and only needs a low dimensional 
projection onto a random subspace. This is advantageous for high dimensional cases, i.e. when the number of model parameters 
is large. In our case, 
this method also allows us to benefit from memory-efficient computation of Jacobian Vector product using the R-operator \citep{pearlmutter1994fast} offered by automatic differentiation
packages \citep{baydin2015automatic} like Pytorch \citep{paszke2017automatic}. This means that we can compute SVD with a limited
computational and memory burden, albeit without sacrificing approximation accuracy \citep{nakatsukasa2017accuracy}. Additionally, we do not recompute the basis at every optimization step but at every $n$ steps, which is efficient
when training with small updates, e.g. when small learning rates and gradient clipping are used \citep{pascanu2013difficulty} (see Appendix \ref{appendix:exp_details} for more details about $n$). 

We study the impact of these choices in practice in Section \ref{section:results_and_disc}. Putting it all together results in the ATTITTUD algorithm, 
Auxiliary Task Training with Influence from Target Task Update Direction, shown as Algorithm \ref{algo:tune_gradient}. The sub-procedure \texttt{randomized\_lowrank\_approx} is detailed in Appendix \ref{appendix:rand_matrix_algo} as Algorithm \ref{algo:rank_k_approx}
\setlength{\textfloatsep}{2pt}
\begin{algorithm}[ht!]
\label{algo:tune_gradient}
\caption{ATTITTUD : Construct Auxiliary Task Surrogate Gradient}
\textbf{Require : } $\vg_{\textit{aux}}$, $\mJ^*$ : Auxiliary task average gradient, primary task Jacobian\\
\textbf{Require : } $\boldsymbol{\eta}_{\textit{aux}} = \big(\eta_{\perp}, \eta_{+}, \eta_{-}\big)$ : Auxiliary task control parameters\\
\textbf{Require : } $k$ : Size of subspace \\
\Indp
    $\vg_{\textit{prim}} = \frac{1}{m}\sum^{m}_{i = 1} \mJ^*_{i, :}$ \\
    $\mV \leftarrow \texttt{randomized\_lowrank\_approx}(\mJ^*, k)$ \\
    $\vp_{\textit{prim}}, ~ \vp_{\textit{aux}} = \mV_t\big(\vg_{\textit{prim}}\big)^{T}, ~ \mV_t\big(\vg_{\textit{aux}}\big)^{T}$ \\
    \tcp{$\circ$ is the hadamard product operator}
    $\vp^{+}_{\textit{aux}}, \vp^{-}_{\textit{aux}} = \bigg(\1_\mathrm{\big[\vp_{\textit{prim}} \circ \vp_{\textit{aux}} ~\geq~ 0\big]} \bigg) \circ \vp_{\textit{aux}}, ~\bigg(\1_\mathrm{\big[\vp_{\textit{prim}} \circ \vp_{\textit{aux}} ~<~ 0\big]} \bigg) \circ \vp_{\textit{aux}} $ \\
    \tcp{Calculate the decomposition components}
    $\vg^{+}_{\textit{aux}}, ~\vg^{-}_{\textit{aux}} = \big(\vp^{+}_{\textit{aux}}\big)^{T}\mV, ~ \big(\vp^{-}_{\textit{aux}}\big)^{T}\mV$ \\
     \tcp{Calculate the out of span component}
    $\vg^{\perp}_{\textit{aux}} = \vg_{\textit{aux}} - \big(\vg^{+}_{\textit{aux}} + \vg^{-}_{\textit{aux}}\big)$ \\
    $\tilde{\vg}_{\textit{aux}} = \big(\eta_{\perp}\cdot\vg^{\perp}_{\textit{aux}}\big) + \big(\eta_{+}\cdot\vg^{+}_{\textit{aux}}\big) + \big(\eta_{-}\cdot\vg^{-}_{\textit{aux}}\big)$ \\
\Indm
\textbf{Return :} $\tilde{\vg}_{\textit{aux}}$ : Auxiliary task surrogate gradient \\
\end{algorithm}.

\section{Experimental Setup}
We compare ATTITTUD with previous methods on a variety of tasks and domains. We rely on both text and image classification tasks to conduct our analysis. We also present ablation experiments to explain the impact of hyper-parameter selection. We make code for ATTITTUD and related experiments available on github. \footnote{Code available here \href{https://github.com/ldery/ATTITTUD}{https://github.com/ldery/ATTITTUD}}

\textbf{Text Classification.} We apply our method on binary sentiment classification. We consider the Amazon Helpfulness \citep{mcauley2015image} and Imdb Movie Review \citep{maas2011learning} tasks. The Amazon Helpfulness task splits text reviews into 115k/5k/25k documents for train-validation-test split whilst the Imdb Review dataset has a 20k/5k/25k split. The Imdb Review task also has 50k unlabeled reviews as extra data which we utilize.

For our models we build on top of \cite{Gururangan_2020}'s work where they introduce Task-Adaptive Pre-training (TAPT). TAPT further pre-trains a generic model, Roberta \citep{liu2019roberta}, by performing Masked Language Modelling, MLM, \citep{devlin2018bert} on the task specific data (ignoring the labels) before doing supervised learning with the same data. We replicate \cite{Gururangan_2020}'s experimental setup and re-use their hyper-parameters for our experiments. We use the TAPT task as our auxiliary task. We extend TAPT to use our method by modifying the TAPT gradient with guidance from the supervised-learning task gradients. As baselines, we compare against TAPT and cross-TAPT: where we swap the masked language modelling pre-training data for the two tasks. Cross-TAPT is a setting where one uses out-of-distribution data for pre-training.

\textbf{Image Classification.} We apply our method to both high-resource and limited-data image classification tasks. We use the Cifar100 dataset \citep{krizhevsky2009learning} to explore the high-resource setting. We follow \cite{rosenbaum2017routing} and treat each of the 20 super-classes / coarse labels of Cifar100 as a separate task. In our asymmetrical task setting, each of the 20 tasks is treated as a primary task, whilst the remaining 95 classes are grouped into a single auxiliary task. Thus, for each coarse label, we have an auxiliary 95-way classification task and a 5-way primary classification task. Moving forward, we refer to this setting as MultiCifar100.

We use a down-sampled version of Cifar10 \citep{krizhevsky2009learning} as a low-resource setting. Specifically, we rely on Cat-vs-Dog for the primary task and use the remaining 8 classes for the auxiliary task. Our auxiliary task is therefore an 8-way classification task where each class has 5,000 examples. We restrict the Cat and Dog classes to only 50 training examples from each class. We use the low-resource setting to compare against other methods and for our ablation study.

For these vision experiments, we use a WideResNet-22 architecture \citep{zagoruyko2016wide} with a depth of $k = 4$. We compare our method to 4 different baselines : no pre-training, vanilla pre-training, multitasking and PCGrad \citep{yu2020gradient}. 
Our architecture is more standard and allows gradient descent optimization unlike the routing network of \cite{rosenbaum2017routing} and \citep{yu2020gradient}, which requires reinforcement learning for training.

\textbf{Medical Imaging Transfer.} We apply our method to cross-domain transfer for low-resource medical image classification. Specifically, we use 5k training examples from the ChexPert Dataset \citep{irvin2019chexpert} as our primary task and seek to identify 5 different thoracic pathologies: atelectasis, cardiomegaly, consolidation, edema and pleural effusion. This setup has been used in several cross-domain pretraining studies~\citep{raghu2019transfusion, jaiswal2019identifying}. Note that since we do not have access to the test set for this task, we use the validation set (231 images) as a proxy test set, and sample 100 images from the training data as a new validation set. We rely on generic photographs (Imagenet) as an auxiliary task~\citep{deng2009imagenet}. We use Tiny Imagenet Dataset \citep{le2015tiny}, a subset of Imagenet which consists of 500 examples each from 200 classes, instead of training on full Imagenet. All approaches are applied to the Resnet18 model \citep{he2016deep} trained with Adam  \citep{kingma2014adam}.


For all our experiments, we select the auxiliary task control parameters~$\boldsymbol{\eta}_{\textit{aux}}$ within $\{(1.0, 1.0, -1.0), (1.0, 1.0, 0.0), (1.0, 0.0, -1.0), (1.0, 0.0, 0.0)\}$ for ease of interpretability. For settings where we compare against multi-tasking, we select $\boldsymbol{\eta}_{\textit{prim}}$ within a small subset of the settings that worked best with multitasking baseline experiments. These choices limit the overhead of hyper-parameter search but still allow us to show the empirical advantage of our method. 
In all our experiments, we provide all methods with similar hyper-parameter search budgets, e.g. for Cifar10-Cat-vs-Dog, we ran a grid search with  16 configurations for regular pretraining, 16 configurations for PCGrad and 12 configurations for ATTITUD.
More experimental details are available in Appendix \ref{appendix:exp_details}
\section{Results and Discussion}
\label{section:results_and_disc}

\textbf{Text Classification.} Table \ref{table:tab_1} shows the results for text classification. When the same data is used both for the auxiliary task of MLM and the primary classification task, TAPT and ATTITTUD both bring a similar improvement over Roberta (Imdb, Amazon columns). For the Cross-TAPT setting where different data is used for the auxiliary task and the primary task (Imdb + Amazon MLM, Amazon + Imdb MLM columns), TAPT does not perform as well as ATTITTUD. This highlights the advantage of ATTITTUD when the auxiliary task data distribution differ from the primary task distribution.

\def\arraystretch{1.2}
\begin{table}
\centering
\begin{tabular}{|l|c|c|c|c| } 
 \hline
  & Imdb & Imdb + Amazon MLM & Amazon  & Amazon + Imdb MLM \\
 \hline
 Roberta & 95.4 $\pm$ 0.14 & -  &  67.0 $\pm$ 0.50 & - \\
 \hline
 TAPT & 96.1 $\pm$ 0.11 & 95.1 $\pm$ 0.10 & 70.3 $\pm$ 0.87 & 67.8 $\pm$ 0.46 \\
 \hline
 Ours & 96.1 $\pm$ 0.09  & 
 $\mathbf{95.4 \pm 0.03}$ & 70.1 $\pm$ 1.13 & $\mathbf{68.5 \pm 1.01}$\\ 
 \hline
\end{tabular}
\caption{\label{table:tab_1} Results on Text Classification measured by F1. Experiments are averaged over 5 runs.}
\end{table}

\textbf{Image Classification.}  Our results are presented in Table \ref{table:tab_2}. Both for MultiCifar100 (high resource setting) and Cifar10-Cat-vs-Dog (low resource setting), ATTITUD shows a strong improvement over baselines. In general, we find that primary-task aware pre-training (Multitasking, PCGrad, Ours) is better than vanilla pre-training which also performs better than having no pre-training at all. For MultiCifar100, we find that using $\boldsymbol{\eta}_{\textit{aux}} = (1.0, 1.0, -1.0), \boldsymbol{\eta}_{\textit{prim}} = 0.1$ worked best for 11 out of the 20 Cifar100 super-classes tasks. Note that $\boldsymbol{\eta}_{\textit{aux}} = (1.0, 1.0, -1.0)$ is an aggressive but novel configuration we introduce. Multitask learning and PCGrad produce better models on 6 and 3 tasks respectively. In the low-resource Cat-vs-Dog, setting ATTITUD produces a bigger boost in performance compared to baselines, with the best performing configuration being $\boldsymbol{\eta}_{\textit{aux}} = (1.0, 0.0, 0.0), \boldsymbol{\eta}_{\textit{prim}} = 0.01$. We posit that this configuration is successful because removal of the in-span components makes overfitting less likely. Applying the out-of-span components means the model learns features that do not harm the loss of the current mini-batch but could be useful later. Note that our best performing configurations are all novel and never an instantiation of PCGrad.

\def\arraystretch{1.2}
\begin{table}[ht!]
\centering
\begin{tabular}{ |l|c|c| }
\hline
 Method & MultiCifar100 & Cifar10-Cat-vs-Dogs \\
 \hline
 No-Pretraining & 57.6 & 53.6 $\pm$ 2.26\\
 \hline
 Vanilla Pre-training & 70.2 & 64.5 $\pm$ 1.26\\
 \hline
 PCGrad & 75.6 & 64.2 $\pm$ 1.10\\
 \hline
 Multitask & 75.5 & 65.3 $\pm$ 1.35 \\
 \hline
 Ours & $\mathbf{76.1}$ & $\mathbf{67.1 \pm 1.31}$ \\ 
 \hline
\end{tabular}\caption{\label{table:tab_2} Average Accuracy on MultiCifar100 and Cat-vs-Dog Cifar10 tasks. Cat-vs-Dog experiments are averaged over 5 runs}
\end{table}

\def\arraystretch{1.2}
\begin{table}[ht]
\centering
\begin{tabular}{ |l|c|c| }
\hline
 Method & Average AUC Across 5 Pathologies\\
 \hline
 No-Pretraining & 78.3 $\pm$ 0.87\\
 \hline
 Pretrained-ResNet & 81.4 $\pm$ 1.34\\
 \hline
 Pretrained-ResNet + Ours & $\mathbf{83.3 \pm 0.71}$ \\ 
 \hline
\end{tabular}
\caption{\label{table:tab_3} Results on ChexPert-5k task measured by average AUC (Area Under Roc-Curve). All experiments are averaged over 5 runs.}
\end{table}

\textbf{Medical Imaging Transfer.}  Table \ref{table:tab_3} shows our results on the ChexPert multi-label classification task. Per-pathology breakdowns are in Appendix \ref{appendix:exp_details}. Doing no pre-training at all performs worst. Our method outperforms using a pre-trained Resnet18 model over Imagenet. We apply the end-task-aware ATTITUD over 100k ImageNet images after the initial pretraining and we reach 83.3\% AUC, an improvement over 81.4\%.


\def\arraystretch{1.2}
\begin{table}[ht]
\centering
\begin{tabular}{ |c|c|c|c|c|c| }
\hline
 Subspace &  Canonical & Random & Unit\_avg\_grad & Randomized\_SVD\\
 \hline
 Average Acc. & 51.42 $\pm$ 2.09  & 58.72 $\pm$ 2.68 & 59.13 $\pm$ 2.08 & $\mathbf{62.2 \pm 4.00}$ \\
 \hline
\end{tabular}
\caption{\label{table:tab_4} Experiment conducted on Cat-vr-Dog Cifar10 dataset for different choices of subspace basis. We use $k = 5$ for Random and Randomized\_SVD. This ablation uses a smaller hyper-parameter budget than Table \ref{table:tab_2} }
\end{table}



\textbf{Ablation Study.}  Our approach relies on the top-k singular vectors from \textit{randomized\_svd} to define the basis to identify the positive and negative component of the auxiliary task gradient, see Section \ref{section:impl_}. This method is more accurate than several alternatives; see Table \ref{table:tab_4}. Namely, we compare our choice to \textit{random}, the basis spanned by $k$ randomly chosen orthogonal vectors in $\R^D$, \textit{unit\_avg\_grad}, the basis spanned by the average primary task gradient, and \textit{canonical}, the per-parameter basis. This ablation was performed under a more limited tuning budget (we cross-validated on configurations $(1, 1, 0)$ and $(1, 1, -1)$ only) than the full Cat-vs-Dog experiments from Table \ref{table:tab_2}.

\begin{figure}[htp!]
\centering
\includegraphics[width=.49\textwidth]{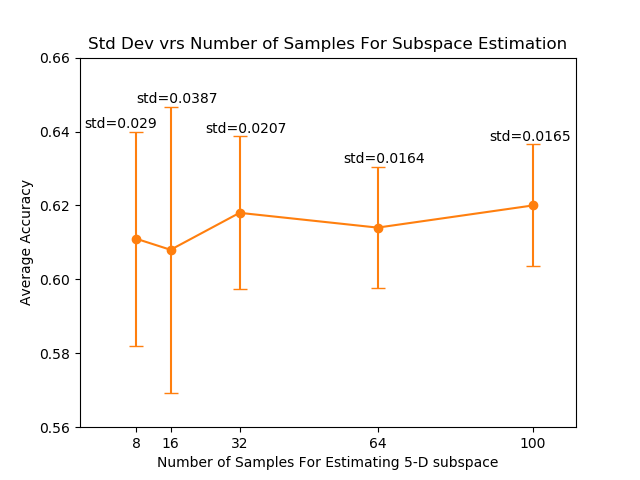} \hfill
\includegraphics[width=.49\textwidth]{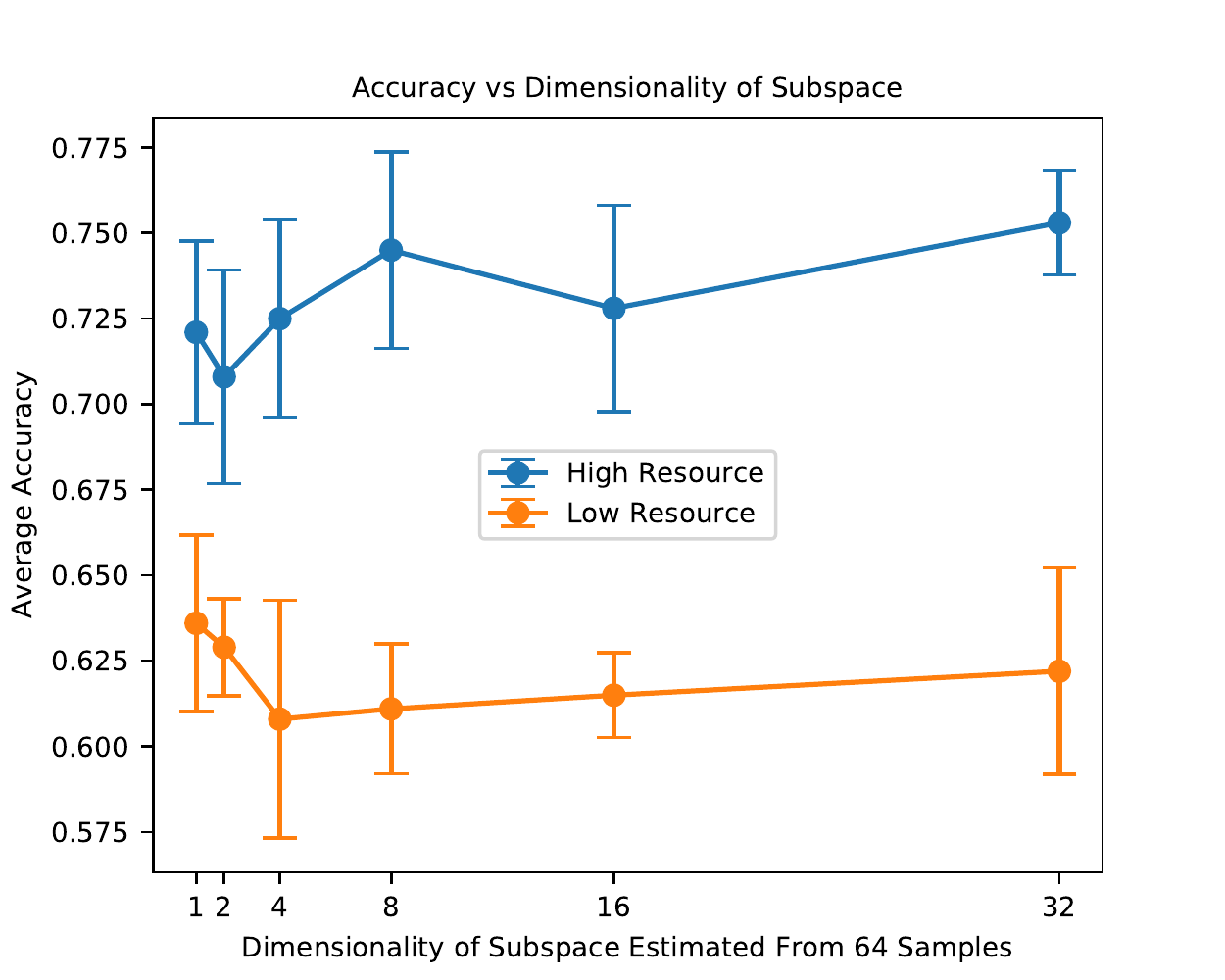}
\caption{Averaged across 5 random initializations. \textbf{Left} We vary the number of samples used to estimate a 5-d subspace up to a maximimum of 100 (the total number of training examples in this low-resource setting). \textbf{Right.} We compare the effect of the dimensionality of the subspace in the low-resource (50 examples each for Cat, Dog classes) and high-resource (1000 examples each per class).}
\label{fig:variance_and_resource_level}
\end{figure}

We also examine the number of samples to estimate the principal directions of the per-example primary task gradient. Larger sample sizes involve more computation but have limited benefit on average accuracy. Large sample sizes however reduce variance, as shown in Figure \ref{fig:variance_and_resource_level} (left). This is as expected since using more samples gives a higher fidelity estimate of the top-k singular vectors.

Another parameter of our algorithm is the size of our subspace, $k$. In general, we observe that in low-resource settings, it is better to operate on the auxiliary task gradient in a smaller dimensional subspace. The opposite holds for high-resource settings. This can be seen in Figure \ref{fig:variance_and_resource_level} (right). Whilst using a larger dimensional subspace captures a richer description of the $\mJ^*$, it also creates the risk of over-fitting especially in a limited data setting. This trade-off therefore has to be validated on a per-task basis.

\section{Conclusions}

In this work, we propose a new approach to training a model with additional help from an auxiliary task. Our method decomposes
the gradients of the auxiliary task according to three directions, with positive, negative and neutral impact on the primary task. This decomposition allows a flexible re-weighting of the auxiliary task components and give rise to a family of training strategies, which encompasses novel and existing approaches. We leverage insights from randomized linear algebra and automatic differentiation to scale the approach to large deep networks. Experiments in multitasking, pretraining and domain transfer over vision and text classification task demonstrate the empirical benefit of our framework.


\bibliography{iclr2021_conference}

\begin{thebibliography}{50}
\providecommand{\natexlab}[1]{#1}
\providecommand{\url}[1]{\texttt{#1}}
\expandafter\ifx\csname urlstyle\endcsname\relax
  \providecommand{\doi}[1]{doi: #1}\else
  \providecommand{\doi}{doi: \begingroup \urlstyle{rm}\Url}\fi

\bibitem[Axelrod et~al.(2011)Axelrod, He, and Gao]{axelrod2011domain}
Amittai Axelrod, Xiaodong He, and Jianfeng Gao.
\newblock Domain adaptation via pseudo in-domain data selection.
\newblock In \emph{EMNLP}, 2011.

\bibitem[Baydin et~al.(2015)Baydin, Pearlmutter, Radul, and
  Siskind]{baydin2015automatic}
Atilim~Gunes Baydin, Barak~A. Pearlmutter, Alexey~Andreyevich Radul, and
  Jeffrey~Mark Siskind.
\newblock Automatic differentiation in machine learning: a survey, 2015.

\bibitem[Caron et~al.(2018)Caron, Bojanowski, Joulin, and Douze]{caron2018deep}
Mathilde Caron, Piotr Bojanowski, Armand Joulin, and Matthijs Douze.
\newblock Deep clustering for unsupervised learning of visual features.
\newblock In \emph{Proceedings of the European Conference on Computer Vision
  (ECCV)}, pp.\  132--149, 2018.

\bibitem[Caron et~al.(2019)Caron, Bojanowski, Mairal, and
  Joulin]{caron2019unsupervised}
Mathilde Caron, Piotr Bojanowski, Julien Mairal, and Armand Joulin.
\newblock Unsupervised pre-training of image features on non-curated data.
\newblock In \emph{Proceedings of the IEEE International Conference on Computer
  Vision}, pp.\  2959--2968, 2019.

\bibitem[Caruana(1997)]{caruana97multitask}
Rich Caruana.
\newblock Multitask learning.
\newblock \emph{Machine Learning}, 1997.

\bibitem[Chaudhry et~al.(2018)Chaudhry, Ranzato, Rohrbach, and
  Elhoseiny]{Chaudhry2018}
Arslan Chaudhry, Marc'Aurelio Ranzato, Marcus Rohrbach, and Mohamed Elhoseiny.
\newblock Efficient lifelong learning with a-gem.
\newblock \emph{arXiv:1812.00420}, 2018.

\bibitem[Chen et~al.(2018)Chen, Badrinarayanan, Lee, and
  Rabinovich]{chen2018gradnorm}
Zhao Chen, Vijay Badrinarayanan, Chen{-}Yu Lee, and Andrew Rabinovich.
\newblock Gradnorm: Gradient normalization for adaptive loss balancing in deep
  multitask networks.
\newblock In \emph{International Conference on Machine Learning}, 2018.

\bibitem[Collobert \& Weston(2008)Collobert and Weston]{collobert2008unified}
Ronan Collobert and Jason Weston.
\newblock A unified architecture for natural language processing: Deep neural
  networks with multitask learning.
\newblock In \emph{International Conference on Machine Learning}, 2008.

\bibitem[Deng et~al.(2009)Deng, Dong, Socher, Li, Li, and
  Fei-Fei]{deng2009imagenet}
Jia Deng, Wei Dong, Richard Socher, Li-Jia Li, Kai Li, and Li~Fei-Fei.
\newblock Imagenet: A large-scale hierarchical image database.
\newblock In \emph{2009 IEEE conference on computer vision and pattern
  recognition}, pp.\  248--255. Ieee, 2009.

\bibitem[Devlin et~al.(2018)Devlin, Chang, Lee, and Toutanova]{devlin2018bert}
Jacob Devlin, Ming-Wei Chang, Kenton Lee, and Kristina Toutanova.
\newblock Bert: Pre-training of deep bidirectional transformers for language
  understanding.
\newblock \emph{arXiv preprint arXiv:1810.04805}, 2018.

\bibitem[Du et~al.(2018)Du, Czarnecki, Jayakumar, Pascanu, and
  Lakshminarayanan]{du2018aux}
Yunshu Du, Wojciech~M. Czarnecki, Siddhant~M. Jayakumar, Razvan Pascanu, and
  Balaji Lakshminarayanan.
\newblock Adapting auxiliary losses using gradient similarity.
\newblock \emph{CoRR}, abs/1812.02224, 2018.

\bibitem[Farajtabar et~al.(2019)Farajtabar, Azizan, Mott, and
  Li]{farajtabar2019orthogonal}
Mehrdad Farajtabar, Navid Azizan, Alex Mott, and Ang Li.
\newblock Orthogonal gradient descent for continual learning.
\newblock \emph{arXiv preprint arXiv:1910.07104}, 2019.

\bibitem[Gururangan et~al.(2020)Gururangan, Marasović, Swayamdipta, Lo,
  Beltagy, Downey, and Smith]{Gururangan_2020}
Suchin Gururangan, Ana Marasović, Swabha Swayamdipta, Kyle Lo, Iz~Beltagy,
  Doug Downey, and Noah~A. Smith.
\newblock Don’t stop pretraining: Adapt language models to domains and tasks.
\newblock \emph{Proceedings of the 58th Annual Meeting of the Association for
  Computational Linguistics}, 2020.
\newblock \doi{10.18653/v1/2020.acl-main.740}.
\newblock URL \url{http://dx.doi.org/10.18653/v1/2020.acl-main.740}.

\bibitem[Halko et~al.(2011)Halko, Martinsson, and Tropp]{halko2011finding}
Nathan Halko, Per-Gunnar Martinsson, and Joel~A Tropp.
\newblock Finding structure with randomness: Probabilistic algorithms for
  constructing approximate matrix decompositions.
\newblock \emph{SIAM review}, 53\penalty0 (2):\penalty0 217--288, 2011.

\bibitem[He et~al.(2016)He, Zhang, Ren, and Sun]{he2016deep}
Kaiming He, Xiangyu Zhang, Shaoqing Ren, and Jian Sun.
\newblock Deep residual learning for image recognition.
\newblock In \emph{Proceedings of the IEEE conference on computer vision and
  pattern recognition}, pp.\  770--778, 2016.

\bibitem[Hessel et~al.(2019)Hessel, Soyer, Espeholt, Czarnecki, Schmitt, and
  van Hasselt]{hessel2019popart}
Matteo Hessel, Hubert Soyer, Lasse Espeholt, Wojciech Czarnecki, Simon Schmitt,
  and Hado van Hasselt.
\newblock Multi-task deep reinforcement learning with popart.
\newblock In \emph{Proceedings of the AAAI Conference on Artificial
  Intelligence}, volume~33, 2019.

\bibitem[Irvin et~al.(2019)Irvin, Rajpurkar, Ko, Yu, Ciurea-Ilcus, Chute,
  Marklund, Haghgoo, Ball, Shpanskaya, et~al.]{irvin2019chexpert}
Jeremy Irvin, Pranav Rajpurkar, Michael Ko, Yifan Yu, Silviana Ciurea-Ilcus,
  Chris Chute, Henrik Marklund, Behzad Haghgoo, Robyn Ball, Katie Shpanskaya,
  et~al.
\newblock Chexpert: A large chest radiograph dataset with uncertainty labels
  and expert comparison.
\newblock In \emph{Proceedings of the AAAI Conference on Artificial
  Intelligence}, volume~33, pp.\  590--597, 2019.

\bibitem[Jaiswal et~al.(2019)Jaiswal, Tiwari, Kumar, Gupta, Khanna, and
  Rodrigues]{jaiswal2019identifying}
Amit~Kumar Jaiswal, Prayag Tiwari, Sachin Kumar, Deepak Gupta, Ashish Khanna,
  and Joel~JPC Rodrigues.
\newblock Identifying pneumonia in chest x-rays: A deep learning approach.
\newblock \emph{Measurement}, 145:\penalty0 511--518, 2019.

\bibitem[Kingma \& Ba(2014)Kingma and Ba]{kingma2014adam}
Diederik~P Kingma and Jimmy Ba.
\newblock Adam: A method for stochastic optimization.
\newblock \emph{arXiv preprint arXiv:1412.6980}, 2014.

\bibitem[Kornblith et~al.(2019)Kornblith, Shlens, and Le]{Kornblith_2019}
Simon Kornblith, Jonathon Shlens, and Quoc~V. Le.
\newblock Do better imagenet models transfer better?
\newblock \emph{2019 IEEE/CVF Conference on Computer Vision and Pattern
  Recognition (CVPR)}, Jun 2019.
\newblock \doi{10.1109/cvpr.2019.00277}.
\newblock URL \url{http://dx.doi.org/10.1109/CVPR.2019.00277}.

\bibitem[Krizhevsky et~al.(2009)Krizhevsky, Hinton,
  et~al.]{krizhevsky2009learning}
Alex Krizhevsky, Geoffrey Hinton, et~al.
\newblock Learning multiple layers of features from tiny images.
\newblock 2009.

\bibitem[Le \& Yang(2015)Le and Yang]{le2015tiny}
Ya~Le and Xuan Yang.
\newblock Tiny imagenet visual recognition challenge.
\newblock \emph{CS 231N}, 7, 2015.

\bibitem[Lin et~al.(2019)Lin, Baweja, Kantor, and Held]{lin2019adaptive}
Xingyu Lin, Harjatin Baweja, George Kantor, and David Held.
\newblock Adaptive auxiliary task weighting for reinforcement learning.
\newblock In \emph{Advances in Neural Information Processing Systems}, pp.\
  4772--4783, 2019.

\bibitem[Liu et~al.(2019)Liu, Ott, Goyal, Du, Joshi, Chen, Levy, Lewis,
  Zettlemoyer, and Stoyanov]{liu2019roberta}
Yinhan Liu, Myle Ott, Naman Goyal, Jingfei Du, Mandar Joshi, Danqi Chen, Omer
  Levy, Mike Lewis, Luke Zettlemoyer, and Veselin Stoyanov.
\newblock Roberta: A robustly optimized bert pretraining approach, 2019.

\bibitem[Lopez-Paz \& Ranzato(2017)Lopez-Paz and Ranzato]{lopezpaz2017}
David Lopez-Paz and Marc'Aurelio Ranzato.
\newblock Gradient episodic memory for continual learning.
\newblock In \emph{Advances in Neural Information Processing Systems}, 2017.

\bibitem[Maas et~al.(2011)Maas, Daly, Pham, Huang, Ng, and
  Potts]{maas2011learning}
Andrew Maas, Raymond~E Daly, Peter~T Pham, Dan Huang, Andrew~Y Ng, and
  Christopher Potts.
\newblock Learning word vectors for sentiment analysis.
\newblock In \emph{Proceedings of the 49th annual meeting of the association
  for computational linguistics: Human language technologies}, pp.\  142--150,
  2011.

\bibitem[McAuley et~al.(2015)McAuley, Targett, Shi, and Van
  Den~Hengel]{mcauley2015image}
Julian McAuley, Christopher Targett, Qinfeng Shi, and Anton Van Den~Hengel.
\newblock Image-based recommendations on styles and substitutes.
\newblock In \emph{Proceedings of the 38th international ACM SIGIR conference
  on research and development in information retrieval}, pp.\  43--52, 2015.

\bibitem[Misra et~al.(2016)Misra, Shrivastava, Gupta, and
  Hebert]{misra2016crossstitch}
Ishan Misra, Abhinav Shrivastava, Abhinav Gupta, and Martial Hebert.
\newblock Cross-stitch networks for multi-task learning.
\newblock In \emph{Conference on Computer Vision and Pattern Recognition,
  {CVPR}}, 2016.

\bibitem[Moore \& Lewis(2010)Moore and Lewis]{moore2010intelligent}
Robert~C Moore and William Lewis.
\newblock Intelligent selection of language model training data.
\newblock In \emph{ACL}, 2010.

\bibitem[Nakatsukasa(2017)]{nakatsukasa2017accuracy}
Yuji Nakatsukasa.
\newblock Accuracy of singular vectors obtained by projection-based svd
  methods.
\newblock \emph{BIT Numerical Mathematics}, 57\penalty0 (4):\penalty0
  1137--1152, 2017.

\bibitem[Namkoong \& Duchi(2017)Namkoong and Duchi]{namkoong:var_reg}
Hongseok Namkoong and John~C Duchi.
\newblock Variance-based regularization with convex objectives.
\newblock In I.~Guyon, U.~V. Luxburg, S.~Bengio, H.~Wallach, R.~Fergus,
  S.~Vishwanathan, and R.~Garnett (eds.), \emph{Advances in Neural Information
  Processing Systems 30}, pp.\  2971--2980. Curran Associates, Inc., 2017.
\newblock URL
  \url{http://papers.nips.cc/paper/6890-variance-based-regularization-with-convex-objectives.pdf}.

\bibitem[Ngiam et~al.(2018)Ngiam, Peng, Vasudevan, Kornblith, Le, and
  Pang]{domain_adapt_transfer}
Jiquan Ngiam, Daiyi Peng, Vijay Vasudevan, Simon Kornblith, Quoc~V. Le, and
  Ruoming Pang.
\newblock Domain adaptive transfer learning with specialist models.
\newblock \emph{CVPR}, 2018.

\bibitem[Pascanu et~al.(2013)Pascanu, Mikolov, and
  Bengio]{pascanu2013difficulty}
Razvan Pascanu, Tomas Mikolov, and Yoshua Bengio.
\newblock On the difficulty of training recurrent neural networks.
\newblock In \emph{International conference on machine learning}, pp.\
  1310--1318, 2013.

\bibitem[Paszke et~al.(2017)Paszke, Gross, Chintala, Chanan, Yang, DeVito, Lin,
  Desmaison, Antiga, and Lerer]{paszke2017automatic}
Adam Paszke, Sam Gross, Soumith Chintala, Gregory Chanan, Edward Yang, Zachary
  DeVito, Zeming Lin, Alban Desmaison, Luca Antiga, and Adam Lerer.
\newblock Automatic differentiation in pytorch.
\newblock 2017.

\bibitem[Pearlmutter(1994)]{pearlmutter1994fast}
Barak~A Pearlmutter.
\newblock Fast exact multiplication by the hessian.
\newblock \emph{Neural computation}, 6\penalty0 (1):\penalty0 147--160, 1994.

\bibitem[Raghu et~al.(2019)Raghu, Zhang, Kleinberg, and
  Bengio]{raghu2019transfusion}
Maithra Raghu, Chiyuan Zhang, Jon Kleinberg, and Samy Bengio.
\newblock Transfusion: Understanding transfer learning for medical imaging.
\newblock In \emph{Advances in neural information processing systems}, pp.\
  3347--3357, 2019.

\bibitem[Rokhlin et~al.(2010)Rokhlin, Szlam, and Tygert]{Rokhlin_2010}
Vladimir Rokhlin, Arthur Szlam, and Mark Tygert.
\newblock A randomized algorithm for principal component analysis.
\newblock \emph{SIAM Journal on Matrix Analysis and Applications}, 31\penalty0
  (3):\penalty0 1100–1124, Jan 2010.
\newblock ISSN 1095-7162.
\newblock \doi{10.1137/080736417}.
\newblock URL \url{http://dx.doi.org/10.1137/080736417}.

\bibitem[Rosenbaum et~al.(2017)Rosenbaum, Klinger, and
  Riemer]{rosenbaum2017routing}
Clemens Rosenbaum, Tim Klinger, and Matthew Riemer.
\newblock Routing networks: Adaptive selection of non-linear functions for
  multi-task learning.
\newblock \emph{arXiv preprint arXiv:1711.01239}, 2017.

\bibitem[Ruder(2017)]{ruder2017overview}
Sebastian Ruder.
\newblock An overview of multi-task learning in deep neural networks.
\newblock \emph{arXiv:1706.05098}, 2017.

\bibitem[Sener \& Koltun(2018)Sener and Koltun]{sener2018multi}
Ozan Sener and Vladlen Koltun.
\newblock Multi-task learning as multi-objective optimization.
\newblock In \emph{Advances in Neural Information Processing Systems}, 2018.

\bibitem[Sinha et~al.(2018)Sinha, Chen, Badrinarayanan, and
  Rabinovich]{sinha2018gradient}
Ayan Sinha, Zhao Chen, Vijay Badrinarayanan, and Andrew Rabinovich.
\newblock Gradient adversarial training of neural networks.
\newblock \emph{arXiv preprint arXiv:1806.08028}, 2018.

\bibitem[Song et~al.(2019)Song, Tan, Qin, Lu, and Liu]{song2019mass}
Kaitao Song, Xu~Tan, Tao Qin, Jianfeng Lu, and Tie-Yan Liu.
\newblock Mass: Masked sequence to sequence pre-training for language
  generation.
\newblock \emph{arXiv preprint arXiv:1905.02450}, 2019.

\bibitem[Suteu \& Guo(2019)Suteu and Guo]{suteu2019regularizing}
Mihai Suteu and Yike Guo.
\newblock Regularizing deep multi-task networks using orthogonal gradients.
\newblock \emph{arXiv preprint arXiv:1912.06844}, 2019.

\bibitem[Vandenhende et~al.(2020)Vandenhende, Georgoulis, Proesmans, Dai, and
  Gool]{v2020revisiting}
Simon Vandenhende, Stamatios Georgoulis, Marc Proesmans, Dengxin Dai, and
  Luc~Van Gool.
\newblock Revisiting multi-task learning in the deep learning era, 2020.

\bibitem[Wang et~al.(2020{\natexlab{a}})Wang, Tian, Ngiam, Yang, Caswell, and
  Parekh]{wang2019learning}
Wei Wang, Ye~Tian, Jiquan Ngiam, Yinfei Yang, Isaac Caswell, and Zarana Parekh.
\newblock Learning a multi-domain curriculum for neural machine translation.
\newblock In \emph{ACL}, 2020{\natexlab{a}}.

\bibitem[Wang et~al.(2020{\natexlab{b}})Wang, Pham, Michel, Anastasopoulos,
  Carbonell, and Neubig]{wang2019optimizing}
Xinyi Wang, Hieu Pham, Paul Michel, Antonios Anastasopoulos, Jaime Carbonell,
  and Graham Neubig.
\newblock Optimizing data usage via differentiable rewards.
\newblock In \emph{ACL}, 2020{\natexlab{b}}.

\bibitem[Yang et~al.(2019)Yang, Dai, Yang, Carbonell, Salakhutdinov, and
  Le]{yang2019xlnet}
Zhilin Yang, Zihang Dai, Yiming Yang, Jaime Carbonell, Russ~R Salakhutdinov,
  and Quoc~V Le.
\newblock Xlnet: Generalized autoregressive pretraining for language
  understanding.
\newblock In \emph{Advances in neural information processing systems}, pp.\
  5753--5763, 2019.

\bibitem[Yu et~al.(2020)Yu, Kumar, Gupta, Levine, Hausman, and
  Finn]{yu2020gradient}
Tianhe Yu, Saurabh Kumar, Abhishek Gupta, Sergey Levine, Karol Hausman, and
  Chelsea Finn.
\newblock Gradient surgery for multi-task learning.
\newblock \emph{arXiv preprint arXiv:2001.06782}, 2020.

\bibitem[Zagoruyko \& Komodakis(2016)Zagoruyko and
  Komodakis]{zagoruyko2016wide}
Sergey Zagoruyko and Nikos Komodakis.
\newblock Wide residual networks.
\newblock \emph{arXiv preprint arXiv:1605.07146}, 2016.

\bibitem[Zhang et~al.(2014)Zhang, Luo, Loy, and Tang]{zhang2014facial}
Zhanpeng Zhang, Ping Luo, Chen~Change Loy, and Xiaoou Tang.
\newblock Facial landmark detection by deep multi-task learning.
\newblock In \emph{European conference on computer vision}. Springer, 2014.

\end{thebibliography}
\bibliographystyle{iclr2021_conference}

\newpage

\appendix
\section{Proof of Theorem \ref{theorem:1}}
\label{appendix:grad_hippo_oath}

\begin{theorem*}
Let $\mathcal{L}^{\rm aux}(\theta_t)$ and $\mathcal{L}^{\rm prim}(\theta_t)$ represent the full batch losses of the auxiliary tasks and primary task respectively at step t. We assume the gradients of $\mathcal{L}^{\rm aux}$ and $\mathcal{L}^{\rm prim}$ are Lipschitz continuous with constant $L > 0$. 
Following the update rule : $\theta_{t + 1} = \theta_t - \alpha \cdot \tilde{\vg}_{\textit{aux}}$, where $\alpha \leq \frac{1}{L}$ is the learning rate, we are guaranteed : 
\begin{equation*}
    \begin{split}
        \mathcal{L}^{\rm aux}(\theta_{t + 1}) &\leq \mathcal{L}^{\rm aux}(\theta_t) \\
        \mathcal{L}^{\rm prim}(\theta_{t + 1}) &\leq \mathcal{L}^{\rm prim}(\theta_t) \\
    \end{split}
\end{equation*}
If $\eta_{-} = 0$ and $\eta_{\perp}, \eta_{+} \geq 0$
\end{theorem*}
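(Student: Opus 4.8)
The claim is a single-step sufficient-decrease statement, so the engine should be the descent lemma for functions with $L$-Lipschitz gradient: for any $\theta$, step size $\alpha$, and direction $d$,
\[
f(\theta - \alpha d)\ \le\ f(\theta) - \alpha\,\langle \nabla f(\theta),\, d\rangle \;+\; \tfrac{L\alpha^{2}}{2}\,\|d\|^{2}.
\]
I would apply it twice, once with $f=\mathcal{L}^{\rm aux}$ and once with $f=\mathcal{L}^{\rm prim}$, in both cases with $d=\tilde{\vg}_{\textit{aux}}$ and $\nabla f(\theta_{t})$ the corresponding full-batch gradient ($\vg_{\textit{aux}}$, resp.\ $\vg_{\textit{prim}}$). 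Everything else is an analysis of the sign and size of the two inner products using the decomposition of Section~\ref{section:Methodology}.

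The two structural facts I rely on are: (i) $\vg^{\perp}_{\textit{aux}}$, $\vg^{+}_{\textit{aux}}$, $\vg^{-}_{\textit{aux}}$ are pairwise orthogonal — $\vg^{\perp}_{\textit{aux}}\in\mathcal{S}^{\perp}$, while $\vg^{+}_{\textit{aux}}$ and $\vg^{-}_{\textit{aux}}$ are supported on complementary subsets of the orthonormal basis $\{u_{i}\}$ of $\mathcal{S}$; and (ii) along every basis direction retained in $\vg^{+}_{\textit{aux}}$ the auxiliary and primary gradients agree in sign, so $\langle \vg_{\textit{prim}}, \vg^{+}_{\textit{aux}}\rangle=\sum_{i\in A}(\vg_{\textit{prim}}\!\cdot u_{i})(\vg_{\textit{aux}}\!\cdot u_{i})\ge 0$, each summand being a product of equal-sign reals (and symmetrically $\langle \vg_{\textit{prim}}, \vg^{-}_{\textit{aux}}\rangle\le 0$). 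Setting $\eta_{-}=0$, so $\tilde{\vg}_{\textit{aux}}=\eta_{\perp}\vg^{\perp}_{\textit{aux}}+\eta_{+}\vg^{+}_{\textit{aux}}$, fact (i) gives the Pythagorean identities $\|\tilde{\vg}_{\textit{aux}}\|^{2}=\eta_{\perp}^{2}\|\vg^{\perp}_{\textit{aux}}\|^{2}+\eta_{+}^{2}\|\vg^{+}_{\textit{aux}}\|^{2}$ and $\langle \vg_{\textit{aux}},\tilde{\vg}_{\textit{aux}}\rangle=\eta_{\perp}\|\vg^{\perp}_{\textit{aux}}\|^{2}+\eta_{+}\|\vg^{+}_{\textit{aux}}\|^{2}$, while $\vg_{\textit{prim}}\in\mathcal{S}$ kills the $\perp$ term and leaves $\langle \vg_{\textit{prim}},\tilde{\vg}_{\textit{aux}}\rangle=\eta_{+}\langle \vg_{\textit{prim}},\vg^{+}_{\textit{aux}}\rangle$. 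With $\eta_{\perp},\eta_{+}\ge 0$ and fact (ii), both inner products are nonnegative, so each descent-lemma bound has the shape ``$-\alpha\cdot(\text{nonneg.}) + \tfrac{L\alpha^{2}}{2}\|\tilde{\vg}_{\textit{aux}}\|^{2}$''.

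The remaining task is to show $\alpha\le 1/L$ forces the curvature term not to overturn the first-order decrease. For $\mathcal{L}^{\rm aux}$ this closes cleanly: for the bounded reweightings considered ($\eta_{\perp},\eta_{+}\le 2$ suffices) one has $\eta_{\perp}^{2}\le 2\eta_{\perp}$ and $\eta_{+}^{2}\le 2\eta_{+}$, hence $\langle \vg_{\textit{aux}},\tilde{\vg}_{\textit{aux}}\rangle\ge\tfrac12\|\tilde{\vg}_{\textit{aux}}\|^{2}$, and $\alpha\le 1/L$ then makes the bound $\le -\alpha(1-\tfrac{L\alpha}{2})\|\tilde{\vg}_{\textit{aux}}\|^{2}\le 0$. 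The step I expect to be the main obstacle is $\mathcal{L}^{\rm prim}$: the orthogonal piece $\eta_{\perp}\vg^{\perp}_{\textit{aux}}$ inflates $\|\tilde{\vg}_{\textit{aux}}\|^{2}$ but is invisible to the first-order term $\eta_{+}\langle \vg_{\textit{prim}},\vg^{+}_{\textit{aux}}\rangle$, so a single naive application of the lemma does not close. To handle this I would split the update into its $-\alpha\eta_{+}\vg^{+}_{\textit{aux}}$ and $-\alpha\eta_{\perp}\vg^{\perp}_{\textit{aux}}$ substeps, using that displacements inside $\mathcal{S}^{\perp}$ do not change $\mathcal{L}^{\rm prim}$ to the order the framework works in, and bound the first substep as in the auxiliary case; making this rigorous — and, as the paper alludes to in Section~\ref{section:impl_}, pinning down exactly which norm-fraction of $\vg_{\textit{prim}}$ is captured by $\mathcal{S}$ — is where the real work sits. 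Finally, I would record the stated corollary: replacing $\eta_{-}=0$ by $\eta_{-}<0$ adds $-\alpha\eta_{-}\langle \vg_{\textit{prim}},\vg^{-}_{\textit{aux}}\rangle$, which is $\le 0$ by fact (ii), so the primary inequality persists, whereas the auxiliary one picks up $-\alpha\eta_{-}\|\vg^{-}_{\textit{aux}}\|^{2}>0$ and may fail.
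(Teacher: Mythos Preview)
Your proposal is more careful than the paper's own argument, and the obstacles you flag are real. The paper does \emph{not} use the descent lemma: it simply writes the first-order Taylor expansion $\mathcal{L}(\theta_{t+1})\approx\mathcal{L}(\theta_t)-\alpha\langle\nabla\mathcal{L}(\theta_t),\tilde{\vg}_{\textit{aux}}\rangle$ for each loss, computes the inner products exactly as you do (the same orthogonality and sign facts (i)--(ii)), and concludes from the sign of the linear term alone. The Lipschitz constant and the step-size condition $\alpha\le 1/L$ are never actually invoked in the body of the proof; the paper only remarks at the end that these ``can be combined\ldots to derive convergence guarantees after some $T$ steps as in optimization literature.'' So the quadratic curvature term, which causes you all the trouble, is simply dropped.

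Your instinct that a rigorous descent-lemma argument does not close as stated is therefore correct, and the extra hypotheses you reach for ($\eta_{\perp},\eta_{+}\le 2$ on the auxiliary side; some control of the $\mathcal{S}^{\perp}$ substep on the primary side) are genuine requirements of that route, not artifacts of your analysis. In particular, your observation that $\eta_{\perp}\vg^{\perp}_{\textit{aux}}$ inflates $\|\tilde{\vg}_{\textit{aux}}\|^{2}$ while contributing nothing to $\langle\vg_{\textit{prim}},\tilde{\vg}_{\textit{aux}}\rangle$ is exactly the place where the second-order term can overturn the first-order decrease for $\mathcal{L}^{\rm prim}$, and the paper's proof has no mechanism to handle it --- it just works to first order. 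If your goal is to match the paper, you can relax to the first-order argument and drop the curvature bookkeeping; if your goal is a rigorous inequality, you will indeed need additional assumptions beyond those in the theorem statement.
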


\begin{proof} 
Let $\mV_t \in \R^{K \times D}$ be the orthonormal matrix whose rows span the per-example primary task gradients $\mJ^*$ at timestep $t$. The projections of the average primary task gradient $\vg_{\textit{prim}} = \frac{1}{m}\sum^{m}_{i = 1} \mJ^*_{i, :}$ and average auxiliary task gradient $\vg_{\textit{aux}}$ at iteration $t$ are : 
$$\vp_{\textit{prim}} = \mV_t\big(\vg_{\textit{prim}}\big)^{T}$$ $$~ \vp_{\textit{aux}} = \mV_t\big(\vg_{\textit{aux}}\big)^{T}$$

$\vp_{\textit{prim}}$ and $\vp_{\textit{aux}}$ will agree on some directions (same sign on those components). We use the operator $[x]_{+}$ to mark these directions of agreement. This operator preserves components that agree and sets those that disagree to zero. As an example given $\vp_{\textit{prim}} = [1, 1, -1]$ and $\vp_{\textit{aux}} = [1, 3, 10]$, $[\vp_{\textit{prim}}]_{+} = [1, 1, 0]$ and $[\vp_{\textit{aux}}]_{+} = [1, 3, 0]$.
For directions that disagree (different signs of the respective components), we introduce the operator $[x]_{-}$. In the above example $[\vp_{\textit{prim}}]_{-} = [0, 0, -1]$ and $[\vp_{\textit{aux}}]_{-} = [0, 0, 10]$. Note that our operators are defined by comparing two vectors $x_1$ and $x_2$, Our operators have the following properties by definition :
$$x = [x]_{-} + [x]_{+}$$ and $$[x]_{+} \perp [x]_{-}, [x_1]_{\pm} \perp [x_2]_{\mp}$$

From Equation \ref{eqn:decomp} :
$$\tilde{\vg}_{\textit{aux}} =  \eta_{+} \vg^{+}_{\textit{aux}} + \eta_{-} \vg^{-}_{\textit{aux}} + \eta_{\perp} \vg^{\perp}_{\textit{aux}}$$ We can re-write this in terms of $[x]_{\pm}$ as : 

$$\tilde{\vg}_{\textit{aux}} =  \eta_{+} [\vp_{\textit{aux}}]_{+} + \eta_{-} [\vp_{\textit{aux}}]_{-} + \eta_{\perp} \big(\vg_{\textit{aux}} - \vp_{\textit{aux}}\big)$$

We now proceed to show the effect of the gradient descent update below on $ \mathcal{L}^{\rm aux}(\theta_{t + 1})$ and $\mathcal{L}^{\rm prim}(\theta_{t + 1})$. 
\begin{equation}
\label{eqn:grad_update}
    \begin{split}
        \theta_{t + 1} = \theta_t - \alpha \cdot \tilde{\vg}_{\textit{aux}}
    \end{split}
\end{equation}

How does this update affect the loss on the primary task loss $ \mathcal{L}^{\rm prim}(\theta_{t + 1})$?
\begin{equation*}
    \begin{split}
        \mathcal{L}^{\rm prim}(\theta_{t + 1}) &= \mathcal{L}^{\rm aux}(\theta_t - \alpha \cdot \tilde{\vg}_{\textit{aux}}) \\\\
        &\approx \mathcal{L}^{\rm prim}(\theta_t) - \alpha \big(\tilde{\vg}_{\textit{aux}}\big)^{T}\vg_{\textit{prim}} ~\textit{ (First order Taylor Expansion)}\\
        &= \mathcal{L}^{\rm prim}(\theta_t) - \alpha \bigg(\eta_{+} [\vp_{\textit{aux}}]_{+} + \eta_{-} [\vp_{\textit{aux}}]_{-} + \eta_{\perp} \vg_{\textit{aux}}^{\perp} \bigg)^T\vg_{\textit{prim}}\\
        &=  \mathcal{L}^{\rm prim}(\theta_t) - \alpha \bigg(\eta_{+} [\vp_{\textit{aux}}]_{+} + \eta_{-} [\vp_{\textit{aux}}]_{-} + \eta_{\perp} \vg_{\textit{aux}}^{\perp} \bigg)^T\bigg( [\vp_{\textit{prim}}]_{+} + [\vp_{\textit{prim}}]_{-} \bigg)\\
        &=  \mathcal{L}^{\rm prim}(\theta_t) - \alpha \bigg(\eta_{+}\bigg( [\vp_{\textit{aux}}]_{+}^{T}[\vp_{\textit{prim}}]_{+} + [\vp_{\textit{aux}}]_{+}^{T}[\vp_{\textit{prim}}]_{-} \bigg) + \eta_{-} 
        \bigg([\vp_{\textit{aux}}]_{-}^{T}[\vp_{\textit{prim}}]_{+} +  [\vp_{\textit{aux}}]_{-}^{T}[\vp_{\textit{prim}}]_{-} \bigg) \bigg)\\
        &=  \mathcal{L}^{\rm prim}(\theta_t) - \alpha \bigg(\eta_{+} [\vp_{\textit{aux}}]_{+}^{T}[\vp_{\textit{prim}}]_{+} + \eta_{-} [\vp_{\textit{aux}}]_{-}^{T}[\vp_{\textit{prim}}]_{-} \bigg)\\
        &\leq \mathcal{L}^{\rm prim}(\theta_t) ~\textit{ (if  $\eta_{-} \leq 0, ~\eta_{\perp}, \eta_{+} \geq 0$)} 
    \end{split}
\end{equation*}
Note that in going from line 3 to 4 in the proof above, we use the fact that
$\big(\vg_{\textit{aux}}^{\perp}\big)^T\vg_{\textit{prim}} = 0$ since $\vg_{\textit{aux}}^{\perp}$ lies outside the subspace and $\vg_{\textit{prim}}$ lies inside it. For the last step of the proof, we use the observations below : \\
\begin{equation*}
    \begin{split}
        [\vp_{\textit{aux}}]_{+}[\vp_{\textit{prim}}]_{+} &\geq 0 ~\text{ since these directions agree in sign} \\
        [\vp_{\textit{aux}}]_{-}[\vp_{\textit{prim}}]_{-} &\leq 0 ~\text{ since these directions disagree in sign} \\
        [\vp_{\textit{aux}}]_{+}[\vp_{\textit{prim}}]_{-} &= 0 ~\text{ by the property of the $[x]_{\pm}$ operator} \\
        [\vp_{\textit{aux}}]_{-}[\vp_{\textit{prim}}]_{+} &= 0 ~\text{ same motivation as above} \\
    \end{split}
\end{equation*}

How does Equation \ref{eqn:grad_update} affect the auxiliary task loss $ \mathcal{L}^{\rm aux}(\theta_{t + 1})$?
\begin{equation*}
    \begin{split}
        \mathcal{L}^{\rm aux}(\theta_{t + 1}) &= \mathcal{L}^{\rm aux}(\theta_t - \alpha \cdot \tilde{\vg}_{\textit{aux}}) \\
        &\approx \mathcal{L}^{\rm aux}(\theta_t) - \alpha \big(\tilde{\vg}_{\textit{aux}}\big)^{T}\vg_{\textit{aux}} ~\textit{ (First order Taylor Expansion)}\\
        &= \mathcal{L}^{\rm aux}(\theta_t) - \alpha \big(\eta_{\perp}\vg_{\textit{aux}}^{\perp} + \eta_{+}\vg_{\textit{aux}}^{+} + \eta_{-}\vg_{\textit{aux}}^{-} \big)^{T}\big(\vg_{\textit{aux}}^{\perp} + \vg_{\textit{aux}}^{+} + \vg_{\textit{aux}}^{-} \big)\\
        &= \mathcal{L}^{\rm aux}(\theta_t) - \alpha \big(\eta_{\perp}\|\vg_{\textit{aux}}^{\perp}\|^2 + \eta_{+}\|\vg_{\textit{aux}}^{+}\|^2 + \eta_{-}\|\vg_{\textit{aux}}^{-}\|^2 \big) ~\textit{(Cross terms cancel due to orthogonality)}\\
        &\leq \mathcal{L}^{\rm aux}(\theta_t) ~\textit{ (If $\eta_{-}, \eta_{\perp}, \eta_{+} \geq 0$)}
    \end{split}
\end{equation*}

Thus, choosing $\eta_{-} = 0$ ensures that we are minimizing both $ \mathcal{L}^{\rm aux}(\theta_{t})$ and $ \mathcal{L}^{\rm prim}(\theta_{t})$. We can combine this with the constraint on $\alpha \leq \frac{1}{L}$ to derive convergence guarantees after some $T$ steps as in optimization literature.
\end{proof}

\section{Randomized Matrix Theory}
\label{appendix:rand_matrix_algo}
\begin{algorithm}[h]
\label{algo:rank_k_approx}
\caption{\texttt{randomized\_lowrank\_approx} : Construct low rank approximation}
  \textbf{Require : } $\mJ \in \R^{m \times D}$ : Input Matrix\\
  \textbf{Require : } $k$   : Rank of subspace \\
  
\Indp
    $\Pi \sim \mathcal{N}(0, I) \in \R^{k \times m}$ \\
    $\mC = \Pi\mJ $ \\
    $\mV \leftarrow \rm{Gram\_Schmidt}(\mC)$ \\
\Indm
\textbf{Return :} $\mV \in \R^{k \times D}$ : Low rank approximation of  $\mJ$\\
\end{algorithm}
The $\rm{Gram\_Schmidt}$ procedure orthogonalizes the rows of an input matrix.

\section{More Experimental Details}
\label{appendix:exp_details}
\textbf{Image Classification}
For MultiCifar100, unlike \cite{rosenbaum2017routing, yu2020gradient} who use a 500-100 train-test split for examples under each fine-grained CIFAR 100 label, we include a validation set and therefore opt for a 400-100-100 train-validation-test split. We test on all 1000 test examples per class.

For Cat-vs-Dog, we use 100 examples from the training set as validation and test on all 1000 test examples per-class.

For Image Classification experiments, we perform pre-training with a learning rate of 1e-4 for all experiments and finetuning learning rate of 5e-4. These values were selected after coarse hyper-parameter search. In both pre-training and finetuning settings, we decay the learning rate by 0.5 if the validation loss has not improved over 4 epochs, up till a minimum learning rate of 1e-5. we use the Adam Optimizer \citep{kingma2014adam} with $\beta = (0.9, 0.999)$. We clip all gradient norms to 1.0 before performing gradient descent. We cross-validated dropout rates within the set \{0.05, 0.1, 0.2, 0.3\} for both pre-training and finetuning steps. We cross validate $\eta_{\textit{prim}}$ based on the relative sizes of primary and auxilary task datasets. All experiments are averaged over 5 random seeds. For all our Vision experiments, we either recompute our subspace basis every $n = 5$ or $n = 10$ iterations. We find that $n$ is not as important as the other hyper-parameters, with the two choices showing similar performance when the other hyper-parameters (learning rate and gradient norm clipping) are fixed to reasonable values.

Due to the fact that Yue et al (PCGrad) treat all tasks symmetrically, which is different from our primary-auxiliary setting, we introduced an extra parameter, $\alpha_{prim}$, for PCGrad to account for weighting the primary task. We cross validated values of  $\alpha_{prim} \in \{0.1, 0.05, 0.01, 0.001\}$.

\textbf{Medical Imaging Transfer}
Table \ref{table:tab_4} presents a more detailed breakdown of the ChexPert task. For 50k examples from Imagenet, our best performing configuration was $\eta_{\textit{aux}} = (1.0, 0.0, -1.0)$. We did not use the primary task gradient directly for pre-training so $\eta_{\textit{prim}} = 0.0$ for all cases. For ATTITUD, we use the same learning rates as in the Image classification setup above. For the No-Pretraining and Vanilla pretraining we cross-validated the learning rates for both finetuning and pre-training from the set \{1e-3, 1e-4\}. We cross-validated the same list of dropout values above.
\def\arraystretch{1.2}
\begin{table}[ht]
\centering
\begin{tabular}{ |l|c|c|c|c| }
\hline
 Method &  No-Pretraining  &  Pretrain w Imgnet & Pretrained + Ours (50k) & Pretrained + Ours (100k) \\
 \hline
 Atelectasis & 76.0 $\pm$ 1.82 & 79.0 $\pm$ 3.66 & $\mathbf{81.6 \pm 1.38}$ & $\mathbf{81.8 \pm 0.80}$  \\
 \hline
 Cardiomegaly & 74.9 $\pm$ 2.34 & 75.8 $\pm$ 4.04 & 78.0 $\pm$ 2.13 & $\mathbf{80.7 \pm 1.79}$ \\
 \hline
  Consolidation & 83.2 $\pm$ 2.26 & $\mathbf{85.3 \pm 1.86}$ & $\mathbf{85.6 \pm 2.32}$  & 84.9 $\pm$ 1.36 \\ 
 \hline
  Edema  & 79.5 $\pm$ 1.27 & 82.6 $\pm$ 0.76 & $\mathbf{85.2 \pm 1.23}$ & 84.7 $\pm$ 1.78 \\ 
  \hline
  P. Effusion  & 77.9 $\pm$ 1.88 & $\mathbf{84.4 \pm 0.75}$ & 83.4 $\pm$ 1.80 & $\mathbf{84.3 \pm 0.65 }$ \\ 
 \hline
\end{tabular}
\caption{\label{table:tab_5} Results on ChexPert-5k tasks measured by average AUC (Area Under Roc-Curve)}
\end{table}

\textbf{Text Classification}
For our NLP experiments, we tried limiting the number of layers we applied ATTITUD to. We achieved good performance without applying ATTITUD to the word embedding layers (these were updated with untouched auxiliary task gradients). We cross-validated $\eta_{\textit{prim}} = \{0.01, 0.05, 0.0025\}$. For all our NLP experiments, we either recompute our subspace basis every $n = 1$ or $n = 4$ times

For all experiments involving ATTITUD, We cross-validate the following choices of the subspace size $k \in \{5, 10, 20\}$ from $\mJ^* \in \R^{m \times D}$ using $m \in \{32, 64\}$. We recompute the subspace every 10 steps for vision experiments and every 4 steps for NLP experiments. We run all experiments for a maximum of 150 pretraining epochs and 500 finetuning epochs. We performed early stopping for all experiments if no improvement after 10 consecutive epochs.

\textbf{Ablation of Fraction of Norm within Subspace}
The left pane of Figure \ref{fig:subspace_type_comparison} reinforces our intuition and confirms that our choice of the top-k singular vectors (\textit{randomized\_svd}) gives the best accuracy as averaged across 5 seeds. \textit{random} is the basis spanned by $k$ randomly chosen orthogonal vectors in $\R^D$, \textit{unit\_avg\_grad} is the basis spanned by the average primary task gradient whilst \textit{canonical} uses the per-parameter basis. Note that $k = 5$ for \textit{random} and \textit{randomized\_svd} whilst for \textit{unit\_avg\_grad} and \textit{canonical}, $k = 1$ and $k = D$ respectively. We use the fraction of the norm of sample gradients within a subspace as indicators of how \textit{semantically} meaningful that choice of subspace is. We expect that a \textit{semantically} meaningful choice of basis will achieve better generalization performance because it captures the essential parts of the gradient with $k \ll D$ . \textit{canonical} trivially captures all the norm of the sampled gradient vectors but because $k = D$, it generalizes poorly. Notice that only small fractions of the norms of sample primary and auxiliary task average gradients lie in the subspace for \textit{random} and \textit{unit\_avg\_grad}, whilst significant fractions lie in \textit{randomized\_svd}.
\begin{figure}[h]
\centering
\includegraphics[scale=0.3]{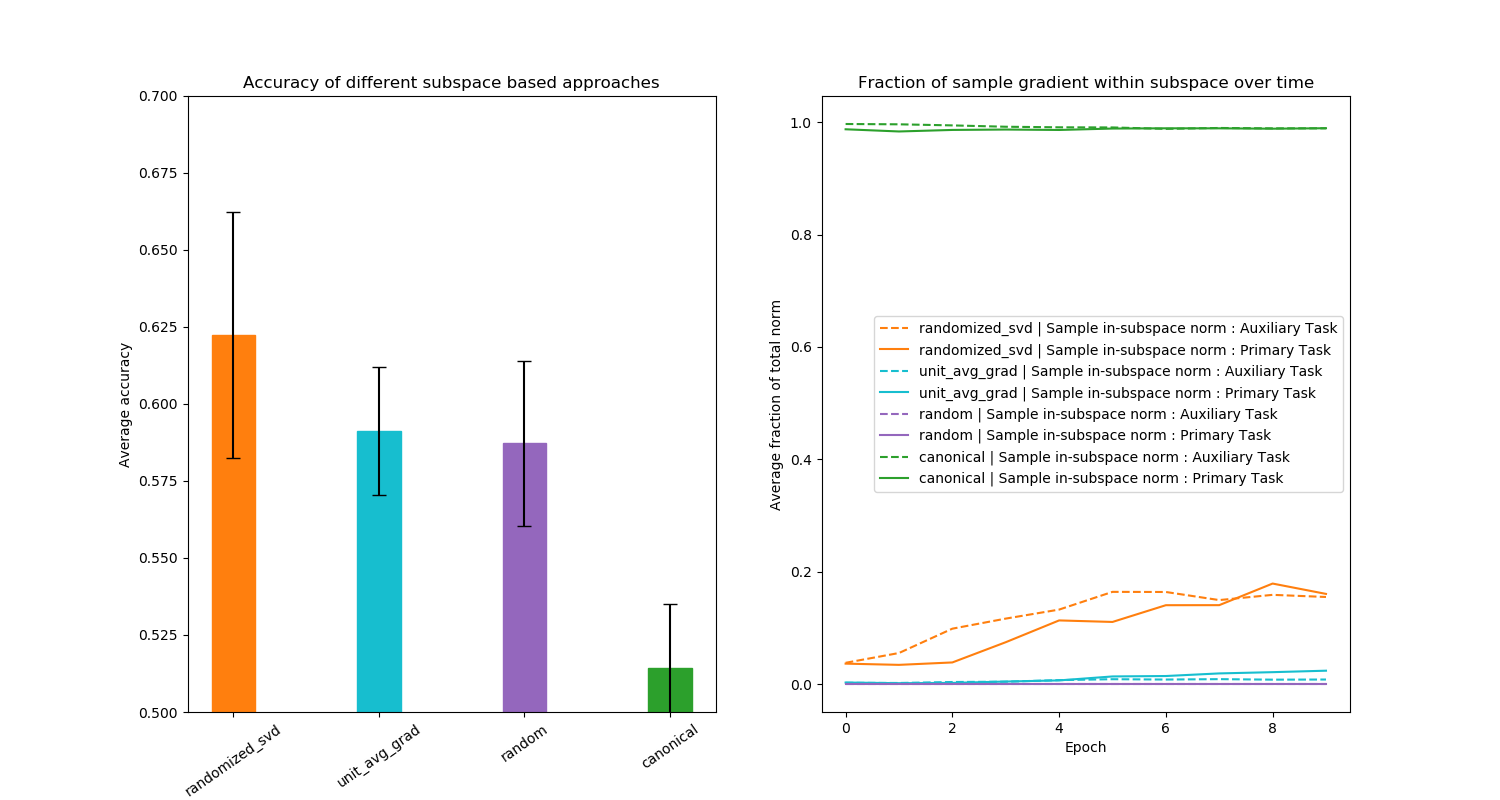}
\caption{
Experiment conducted on Cat-vr-Dog Cifar10 dataset. \textbf{Left} Averaged accuracy across 5 seeds of different choices of basis. Our
choice, randomized\_svd performs best. \textbf{Right} We look at the fraction of the norm of $\vg_{\textit{aux}}$
within each subspace (dashed line). We also do so for a randomly sampled mini-batch of the primary task (solid line).
}
\label{fig:subspace_type_comparison}
\end{figure}

\end{document}